\documentclass[letterpaper, 10pt]{article}

\usepackage[margin = 1.25in]{geometry}

\usepackage[T1]{fontenc}
\usepackage[utf8]{inputenc}
\usepackage{lmodern}

\usepackage{url}            %
\usepackage{booktabs}       %
\usepackage{nicefrac}       %
\usepackage{microtype}      %

\usepackage{amsmath,amssymb,amsfonts,textcomp,graphicx,nicefrac,mathtools,mathrsfs, dsfont, amsthm, enumitem} 
\usepackage{dsfont}

\usepackage{bbm}

\usepackage[title]{appendix}

\usepackage{soul}

\usepackage{mathtools}
\usepackage{footmisc}
\usepackage{tikz}

\usepackage{xcolor}
\usepackage{float}

\usepackage{subcaption}

\usepackage[colorlinks=true, linkcolor=red, citecolor=blue]{hyperref}
\usepackage{multirow}
\makeatletter
\def\thm@space@setup{\thm@preskip=2pt
\thm@postskip=0pt}
\makeatother

\newtheoremstyle{dotless}{}{}{\itshape}{}{\bfseries}{}{ }{}
\theoremstyle{dotless}

\newtheorem{def_num}{Definition}

\newtheorem*{defi}{Definition}
\theoremstyle{plain}
\newtheorem{myth}{Theorem}

\newtheorem{mylem}[myth]{Lemma}

\newtheoremstyle{named}{}{}{\itshape}{}{\bfseries}{.}{.5em}{#1 #3}
\theoremstyle{named}
\newtheorem*{namthm*}{Theorem}

\makeatletter

\makeatother

\providecommand{\customgenericname}{}
\newcommand{\newcustomtheorem}[2]{%
  \newenvironment{#1}[1]
  {%
   \renewcommand\customgenericname{#2}%
   \renewcommand\theinnercustomgeneric{##1}%
   \innercustomgeneric
  }
  {\endinnercustomgeneric}
}

\newcustomtheorem{customthm}{Theorem}
\newcustomtheorem{customlemma}{Lemma}

\newcommand{\ind}[1]{\mathds{1}\left\{ #1 \right\}}

\newcommand{\vc}{\textsc{vc}}
\newcommand{\bud}{\mathsf{B}}
\usepackage{algorithm}
\usepackage[noend]{algpseudocode}

\makeatletter
\def\BState{\State\hskip-\ALG@thistlm}
\makeatother
\DeclareMathOperator*{\argmin}{argmin}

\usepackage[
backend=bibtex,
maxalphanames =4,
useprefix=true,
style=alphabetic,%
sorting=anyt,
maxbibnames=99
]{biblatex}
\addbibresource{Budget_references.bib}

\title{Budget Learning via Bracketing}

\author{ {Aditya Gangrade, Durmus Alp Emre Acar, Venkatesh Saligrama} \\ \large{Boston University} \\ \normalsize{\texttt{\{gangrade, alpacar, srv\}@bu.edu}} 
}
\date{\vspace{-12pt}}
\begin{document}

\maketitle %

\setlength\parskip{4pt plus 1pt minus 1pt}
\setlength\parindent{0pt}

\begin{abstract}

Conventional machine learning applications in the mobile/IoT setting transmit data to a cloud-server for predictions. Due to cost considerations (power, latency, monetary), it is desirable to minimise device-to-server transmissions. The budget learning (BL) problem poses the learner's goal as minimising use of the cloud while suffering no discernible loss in accuracy, under the constraint that the methods employed be edge-implementable. 

We propose a new formulation for the BL problem via the concept of bracketings. Concretely, we propose to sandwich the cloud's prediction, $g,$ via functions $h^-, h^+$ from a `simple' class so that $h^- \le g \le h^+$ nearly always. On an instance $x$, if $h^+(x)=h^-(x)$, we leverage local processing, and bypass the cloud. We explore theoretical aspects of this formulation, providing PAC-style learnability definitions; associating the notion of budget learnability to approximability via brackets; and giving VC-theoretic analyses of their properties. We empirically validate our theory on real-world datasets, demonstrating improved performance over prior gating based methods.
\end{abstract}

\section{Introduction}

Edge devices in mobile and IoT applications are battery and processing power limited. This imposes severe constraints on the methods implementable in such settings - for instance, the typical CPU-based structure of such devices precludes the use of many convolutional layers in vision tasks due to computational latency \cite{zhou2019distributing}, imposing architectural constraints. In particular, modern high accuracy methods like deep neural networks are seldom implementable in these settings. At the same time, edge devices are required to give fast and accurate decisions. Enabling such mechanisms is an important technical challenge.%

Typically, practitioners either learn weak models that can be implemented on the edge (e.g.~\cite{wu2019fbnet, pmlr-v70-kumar17a, hinton2015distilling}), which suffer more errors, or they learn a complex model, which is implemented in a cloud\footnote{or, more realistically, purchase access to a cloud-based model owned by a company that has sufficient data and computational power (e.g.~\cite{mlkit, coreml}).}. The latter solution is also not ideal - cloud access must be purchased, the prediction pipeline suffers from communication latency, and, since communication consumes the majority of the battery power of such a device \cite{zhu_iot}, such solutions limit the device's operational lifetime (see also industry articles, e.g.~\cite{norman_2019, hollemans_2017}). A third option, largely unexplored in practice, is a hybrid of these strategies - we may learn mechanisms to filter out `easy' instances, which may be classified at the edge, and send `difficult' instances to the cloud. The reduction in cloud usage provides direct benefits in,  e.g., battery life, yet accuracy may be retained. Similar concerns apply in many contexts, e.g.~in medicine, security, and web-search \cite{JMLR:v15:xu14a,nan2017adaptive}. %

The key challenge in these applications is to maintain a high accuracy while keeping the usage of the complex classifier, i.e.~the budget, low. To keep accuracy high, we enforce that on the locally predicted instances, the prediction \emph{nearly always agrees} with the cloud. This is thus a problem of `bottom-up' budget learning (BL).%

The natural approach to BL is via the `gating formulation': one learns a gating function $\gamma$, and a local predictor $\pi$, such that if $\gamma = 1$ then $\pi$ is queried, and otherwise the cloud is queried. Unfortunately, this setup is computationally difficult, since the overall classifier involves the product $\pi \cdot \gamma,$ and optimising over the induced non-covexity is hard. Previous efforts try to meet this head on, but either yield inefficient methods, or require difficult to justify relaxations.%

\subsubsection*{Our Contributions}

Our main contribution is a novel formulation of the BL problem, via the notion of brackets, that sidesteps this issue. For functions $h^- \le h^+,$ the bracket $[h^-, h^+]$ $:= \{ f: h^- \le f\le h^+\}$. Brackets provide accurate pointwise control on a binary function - for $f \in [h^-, h^+]$, if $h^+(x) = h^-(x),$ then $f(x)$ takes the same value. We propose to learn a bracketing of the cloud, predicting locally when this condition holds.%

The key advantage of this method arises from the surprising property that we may learn optimal brackets via two \emph{decoupled} learning problems - separately approximating the function from above and from below. These one-sided problems are tractable under convex surrogates, with minimal statistical compromises. Further, this comes at negligible loss of expressivity compared to gating - the existence of good gates and predictors implies the existence of equally good brackets.%

Since expressivity is retained, bracketings lead naturally to definitions of learnability that are theoretically analysable. We define a PAC-style approach to one-sided learning, and provide a VC-theoretic characterisation of the same. We also identify the key budget learning problem as an approximation theoretic question - \emph{which complex classes have `good' bracketings by simple classes?} We characterise this for a binary version of H\"{o}lder smooth classes, and also provide partial results for generic classes with bounded VC dimension.%

Finally, to validate the formulation, we implement the bracketing framework on a binary versions of MNIST and CIFAR classification tasks. With a strong disparity in the cloud and edge models (\S\ref{sec:exp_main}), we obtain usages of $20-40\%$ at accuracies higher than $98\%$ with respect to the cloud. Further, we outperform existing methods in usage by factors of $1.2-1.4$ at these high accuracies.%

\subsubsection*{Related Work}

A common approach is to simply learn \emph{local classifiers with no cloud usage}. If the cloud model is available, one can use methods such as distillation \cite{hinton2015distilling}, and in general one can train classifiers in a resource aware way (e.g. \cite{pmlr-v70-kumar17a, GuptaSGSPKGUVJ17, wu2019fbnet}). The main limitation of this approach is that if the setting is complex enough for a cloud to be needed, then in general such methods cannot attain a similar accuracy level.

\emph{Top-Down and Sequential Approaches} are based on successively learning classifiers of increasing complexity, incorporating the previously learned classifiers (see \cite{JMLR:v15:xu14a,pmlr-v31-trapeznikov13a, NIPS2015_5982, nan2016pruning, bolukbasi2017adaptive}). This approach suffers a combinatorial explosion in the complexity of the learning problems. Recent efforts utilise reinforcement learning methods to rectify this (e.g.~\cite{janisch2019classification, janisch2019classification2, peng2018refuel}).

The BL problem is intimately related to \emph{learning with abstention} (LwA). Indeed, sending an example to the cloud is the same as abstaining on it. The twist in BL is twofold - we assume that a noiseless ground truth, i.e., the `cloud classifier' exists, while LwA tends to concentrate on settings where the labels are noisy; and the class of locally implementable models is much weaker than the class known to contain the cloud model, while the LwA literature is generally not concerned with `simple' classifiers. In addition, no theoretical work on LwA captures this setting. {Perhaps the closest is the study of `perfect selective classification' in \cite{el2010foundations, wiener2011agnostic}, but this work focuses on the stringent condition of getting perfect agreement with certainty, and only gives analyses for classes with controlled disagreement coefficients.}

\emph{Plug-in methods} utilise a pre-trained low complexity model, and learn a gate by estimating its low-confidence regions. We note that much of the theoretical analysis for LwA concentrates on such methods, e.g.~\cite{herbei2006classification, bartlett2008classification, denis2019consistency, shekhar2019binary}.\footnote{\cite{herbei2006classification} also analyse ERM in the setting where a fixed cost for abstention is available.} The principal disadvantage here is that these classifiers are not tuned to the BL problem. However, even crude methods such as gating by thresholding the softmax response of a classifier are very effective (see \S\ref{sec:exp_main}), and serve as strong baselines as observed in \cite{geifman2017selective, geifman2019selectivenet} in the setting of deep neural networks.%

A number of methods aim at \emph{jointly learning gating and prediction} functions (c.f.~\S\ref{sec:gating}). Some of these belong to the LwA literature - \cite{geifman2019selectivenet} proposes to ignore the non-convexity, and use SGD to optimise a loss of the form $\widehat{\mu}( \pi \neq g|\gamma = 1)$ subject to a budget constraint, while \cite{cortes2016learning} instead proposes the relaxation $\pi \gamma \le (\pi + \gamma)/2,$ and optimise this upper bound via convex relaxations. In the BL literature, \cite{nan2017adaptive, nan2017dynamic} propose to relax the problem by introducing an auxiliary variable to decouple $\pi$ and $\gamma$, and then perform alternating minimisation with a KL penalty between the gate and the auxiliary. Note that while each of these papers further specifies algorithms to train classifiers, their main conceptual contribution is the method they take to ameliorate the essential non-convexity of the gating setup. In contrast, our new formulation sidesteps this issue entirely.

Our approach to one-sided learning is related to \emph{Neyman-Pearson classification} \cite{cannon2002learning, scott2005neyman}, with the difference that instead of studying the conditional risks, we are concerned with restricting the total risk subject to one-sided constraints. This leads to the generalisation errors of one-sided learning scaling with the total sample size, as opposed to the per-class sample sizes (see \S\ref{sec:os_learnability}).%

\emph{Bracketings} are important in empirical process theory - for instance, `bracketable' classes characterise the universal Glivenko-Cantelli property \cite{vanHandel2013_arxiv}. While there are generic estimates of the bracketing entropies of various function classes (e.g.~Ch2 of \cite{vaartWellner_arxiv}), these typically do not constrain for complexity of the resulting brackets, and thus their application in our setting is limited. Instead, we explicitly aim to bracket functions by \emph{simple} function classes (see \S\ref{sec:bud_learnability_bones}). We note, however, that our results towards this are preliminary.

\section{Definitions and Formulations} \label{sec:def}

We will restrict discussion to binary functions on the domain $\mathcal{X}$, which is assumed to be compact\footnote{Issues of measurability, and of existence of minimisers of optimisation problems posed as infima are suppressed, as is common in learning theory.}. $\mathcal{H}$ denotes the class of local classifiers, and $\mathcal{G}$ the class of cloud classifiers. We use $g\in \mathcal{G}$ to  denote the high-complexity `cloud' classifier. The training set is taken to be $\{(X_i, g(X_i)\}$, where the $X_i$ are assumed to have been sampled independently and identically from an unknown probability measure $\mu$ on $\mathcal{X}$.\footnote{If instead we have a raw dataset and no $g$, we assume that $g$ is obtained by training a function in $\mathcal{G}$ over this set.} For feasibility of various programs (particularly Def.~\ref{def:osl}), we assume that $ \{0, 1\} \subset \mathcal{H},$ and that  $h \in \mathcal{H} \iff 1 - h \in \mathcal{H}.$ %

The main problem is to learn approximations to $g$ in $\mathcal{H},$ with the option to `fall back' to $g.$ We aim at retaining high accuracy w.r.t.~$g$ while minimising usage of $g$ itself. 

\subsection{Bracketing for Budget Learning} 

\begin{defi}
Given a measure $\mu$ and functions, $h_1 \le h_2,$ the bracket $[h_1, h_2]$ is the set of all $\{0,1\}$-valued functions $f$ such that $h_1 \le f \le h_2$ $\mu$-a.s. The $\mu$-size of such a bracket is \( | [h_1, h_2]|_\mu := \mu(h_1 \neq h_2).\)
\end{defi}
As an example, on $\mathcal[0,1],$ the functions $0(x)$ and $\ind{x > \nicefrac{1}{2}}$ induce the bracket containing all functions that are $0$ on $[0,\nicefrac{1}{2}].$ This bracket has size $\mu(X > \nicefrac{1}{2})$.%

Notice that if $h_1 \neq h_2$ in the above, it is forced that $h_1 = 0, h_2 = 1.$ We will be concerned with the brackets that can be built using $h$s from the local class $\mathcal{H}$. \begin{defi}
    The set of brackets generated by a class $\mathcal{H}$ is \( \{ [h_1, h_2] : h_1\le h_2 , h_1, h_2 \in \mathcal{H} \}.\) We also say that these are $\mathcal{H}$-brackets.\vspace{-2pt}
\end{defi}
Suppose we can find a bracket $[h^-, h^+]$ in $\mathcal{H}$ that contains $g$. Since $h^- = h^+$ forces $g$ to take the same value, we offer the classifier \[  c_{[h^-, h^+]}(x) = \begin{cases}  h^+(x) & \textrm{ if } h^+(x) = h^-(x) \\ g(x) & \textrm{ if } h^+(x) \neq h^-(x) \end{cases}.\]
The above has the \emph{usage} $|[h^-, h^+]|_\mu.$ The budget needed by a class $\mathcal{H}$ to bracket $(g, \mu)$ is the smallest such usage,\[  \mathsf{B}(g, \mu, \mathcal{H}) := \inf_{\mathcal{H}-\mathrm{brackets}} \{ | [h^-, h^+] |_\mu: g \in [h^-, h^+]\}. \] This extends naturally to bracketing of sets. \begin{def_num}\label{def:bracket_appx}
    A set of function-measure pairs $\mathcal{S} = \{ (g_i,\mu_i) \}$ is bracket-approximable by a class $\mathcal{H}$ if for every $(g,\mu),$ there exists a $\mathcal{H}$-bracket containing $g$. The budget required for bracket approximation of $\mathcal{S}$ by $\mathcal{H}$ is \[ 
    \bud(\mathcal{S}, \mathcal{H}) := \sup_{ (g,\mu) \in \mathcal{S}} \bud(g,\mu, \mathcal{H}).\] 
\end{def_num} This is a very weak notion of approximation - all it demands is that for every $g$, we can find some $\mathcal{H}$-bracket. Typical study of bracketings concentrates on real valued functions, and studies how many brackets, or how large an $\mathcal{H}$, we need to make the loss $\bud$ smaller than some given value. We defer such explorations to \S\ref{sec:bud_learn_def}, where we define a notion of budget learning. 

For the following discussion, it is useful to define a relaxed version of brackets. \begin{defi}
Let $\alpha \in [0,1],$ and $h_1, h_2$ be $\{0,1\}$-valued functions such that $\mu( h_1 \le h_2) \ge 1 - \nicefrac{\alpha}{2}$. The $\alpha$-approximate bracket $[h_1, h_2]$ with respect to $\mu$ is the set of functions $f$ such that $\mu( h_1 \le f \le h_2) \ge 1- \alpha$. We call $1-\alpha$ the accuracy of the bracketing.
\end{defi} 
The above brackets are approximate in two ways: the order of $h_1$ and $h_2$ may be reversed, and the functions in the $[h_1, h_2]$ may leak out from within them.

\subsection{One-sided Approximation and Decoupled Optimisation of Brackets}\label{sec:decoupled_osl_for_bl}

In order to discuss the decoupled optimisation of brackets, we introduce the notion of \emph{one-sided approximation}. 

\begin{def_num}\label{def:osl}
    For a function-measure pair $(g,\mu),$ an \emph{approximation from below} to $g$ in a class $\mathcal{H}$ is any minimiser of the following optimisation problem \[  \mathsf{L}(g, \mu, \mathcal{H}) := \inf \{ \mu(h \neq g) : h \in \mathcal{H}, h \le g\}.\] We refer to $\mathsf{L}$ as the inefficiency of approximation from below of $(g,\mu)$ by $\mathcal{H}$.
    We analogously define \emph{approximation from above} as $1-h,$ where $h$ is an approximation of $1-g$ from below. 
\end{def_num}
We use `one-sided approximation' to refer to both approximation from above and below. 

If we let $h^-$ be an approximation of a function $g$ from below, and $h^+$ an approximation from above, then it follows that $h^- \le g \le h^+$. Thus, the bracket $[h^-, h^+]$ is well-defined. Further, for any bracket containing $g$, 
\vspace{-5pt}
\begin{align*}
    \mu( h^+ \neq h^-) &= \mu(h^+ \neq h^-, g = 1) + \mu(h^+ \neq h^-, g = 0) \\
                       &= \mu(h^- = 0, g = 1) + \mu( h^+ =1 , g = 0) \\
                       &= \mu(h^- \neq g) + \mu(h^+ \neq g).
\end{align*} 
\noindent Thus, if $h^+$ and $h^-$ are respectively the minimisers of the right hand side, they must also be minimisers of the left hand side. Immediately, we have \[  \mathsf{B}(g, \mu, \mathcal{H}) = \mathsf{L}(g, \mu, \mathcal{H}) + \mathsf{L}(1-g, \mu, \mathcal{H}),\] and the respective minimisers of the $\mathsf{L}$s form a $\mu$-optimal $\mathcal{H}$-bracketing of $g$!

This means that in order to bracket $g$ optimally, it suffices to \emph{separately} learn approximations to $g$ from above and below. This decouples the optimisation problems inherent in learning these, and allows easy convex relaxations of both the above problems. 

Note that the reverse direction trivially holds - the optimal bracket containing $g$ provides two functions which upper and lower approximate $g$. These functions are optimal for the respective OSL problems.
\subsection{Convex Surrogates and ERM}\label{sec:convex_osl_&_bud_pipeline}
\S\ref{sec:decoupled_osl_for_bl} suggests one-sided learning as a method for learning bracketings. However, in an ML context, the optimisation problem of Def.~\ref{def:osl} is meaningless since $\mu$ is not available. We approach this via empirical risk minimisation (ERM) (see also \S\ref{sec:os_learnability},\S\ref{sec:osl-vc-prop}).\vspace{2pt}\\
To handle the intractable $0-1$ loss, we take the standard approach of relaxing the $h$ to take values in $[0,1]$, subsequently thresholded to get a binary function, and the loss to a convex surrogate $\ell$. We let $\theta$ be a parameterisation of $h$.\vspace{2pt}\\
Importantly, in a practical context, while solutions that are always below $g$ may be limited, a slight relaxation to `nearly always' below can yield tenable classifiers. Adopting this view, we also relax the constraint, possibly by a different surrogate $\ell'$\footnote{e.g.~$\ell'$ may grow faster than $\ell$ to minimise leakage}, and allow an explicit user determined leakage constraint $\zeta$.\vspace{2pt}\\
Finally, as is standard, we propose solving a Lagrangian form of the resulting optimisation problem via SGD over $\theta$. This gives the practical program \begin{equation}\label{one_sided_relaxed_with_data}
    \underset{\theta}{\min} \sum_{i : g(x_i) = 1} \frac{\ell(1-h_\theta(x_i))}{n_1} + \xi \sum_{i: g(x_i) = 0}  \frac{\ell'(h_\theta(x_i))}{n_0},
\end{equation} with a Lagrange multiplier $\xi,$ and where $n_b = |\{i :g(X_i) = b\}|$ for $b \in \{0,1\}$. %

The resulting bracketing scheme is as follows. The user may specify $(\ell, \ell')$, and a leakage constraint $\zeta \in (0,1].$ Each $\xi$ in (\ref{one_sided_relaxed_with_data}) yields a solution $\theta_\xi$. We propose scanning over $\xi \in \Xi,$ for some gridding $\Xi.$ Next, for each $\xi,$ we utilise a validation set $V$ to compute the empirical means $\hat{\mu}_V(h_{\theta_\xi} \neq g)$ and $\hat{\mu}_V( h_{\theta_\xi} = 1, g = 0),$ and select the $\theta_\xi$ which minimises the first, subject to the second being smaller than $\zeta/2 - \overline{\textrm{Bin}}(|V|, \zeta/2, \delta/2|\Xi|),$ where $\overline{\mathrm{Bin}}$ is the binomial tail inversion function as studied by \cite{langford2005tutorial}. Such a selection gives a $h^-.$ Similarly, we may learn an approximation from above $h^+$. Notice that with probability at least $(1-\delta),$ the $[h^-,h^+]$ so constructed is a $\zeta$-approximate bracket that contains $g$ (and thus has accuracy at least $1-\zeta$ w.r.t.~$g$).
\if0
This whole pipeline is summarised below. $\Xi$ denotes the values of $\xi$ scanned over. $\mathrm{T}$ is the training set, and $\mathrm{T}'$ is $\mathrm{T}$ but with labels flipped. $\mathrm{V}$ is the validation set, and $\hat{\mu}_{\mathrm{V}}$ is the empirical distribution on it. \begin{algorithm}[H]
        \caption{BracketingBudgetLearner($ \mathrm{T}, \mathrm{V}, \lambda $) }\label{alg:tst}
        \begin{algorithmic}[1]
            \For{$\xi \in \Xi$}
                \State $f^-_\xi \gets$ minimiser of (\ref{one_sided_relaxed_with_data}) on $\mathrm{T}$ using SGD.
                \State $f^+_\xi \gets$ minimiser of (\ref{one_sided_relaxed_with_data}) on $\mathrm{T}'$ using SGD.
                \State $h^-_\xi \gets \ind{f^-_\xi > 1/2}$.
                \State $h^+_\xi \gets \ind{f^+_\xi < 1/2}$.
                \State $\alpha^-_\xi \gets \hat{\mu}_\mathrm{V}( g = 1, h^-_\xi = 0). $
                \State $\beta^-_\xi \gets \hat{\mu}_\mathrm{V}( g = 0, h^-_\xi = 1).$
                \State $\alpha^+_\xi \gets \hat{\mu}_\mathrm{V}( g = 0, h^+_\xi = 1) .$
                \State $\beta^+_\xi \gets \hat{\mu}_\mathrm{V}( g = 1, h^+_\xi = 0).$
            \EndFor
            \State $h^- \gets h^-_{\xi^-},$ where $\xi^- = \argmin_\xi \,\alpha^-_\xi\,\, \mathrm{ s.t. }\,\, \beta^-_\xi < \nicefrac{\lambda}{2}.$
            \State $h^+ \gets h^+_{\xi^+},$ where $\xi^+ = \argmin_\xi \,\alpha^+_\xi\,\, \mathrm{ s.t. }\,\, \beta^+_\xi < \nicefrac{\lambda}{2}.$
            \State \Return $h^-, h^+$.
        \end{algorithmic}%
    \end{algorithm}
\fi

\paragraph{Multi-class Extensions} In passing, we point out that our framework can be extended to multi-class setting. For an $M$-class setting, we may represent $g$ as the one-hot encoding $(g_1, \dots, g_M)$. Consistency in $g$ demands that $\sum g_i = 1$. We may learn lower-approximations $h_i$ to each $g_i$, and predict when only one of the $h_i$ is 1. One issue is that this leads to identifying class-specific leakage-levels, which then need to be optimised globally to achieve usage constraints. 

\subsection{Comparison to Gating Formulation}\label{sec:gating} %

The BL problem is typically formulated as simultaneously learning a gating function $\gamma$ and a local predictor $\pi$, so that for a point $x$, if $\gamma = 1,$ we predict locally using $\pi$, and if $\gamma = 0,$ we instead call the function $g$. This yields usage $\mu(\gamma = 0)$ for the overall classifier \[ 
c_{\gamma, \pi}(x) =  \pi(x) \gamma (x) + g(x) ( 1-\gamma(x)).\]
Notice that bracketing is in fact a restricted form of gating and prediction - the gate $\gamma = \ind{h^+ = h^-},$ and the predictor (say) $h^+$. In fact these are essentially identical in their expressive power for a given `richness': Suppose one learns gating and predictor functions $\gamma$ and $\pi$ from classes ${\Gamma}$ and ${\Pi}$ respectively\footnote{Observe that these must have comparable complexities, since they are both to be implemented on the same system.} Given this,
\[  h^+ := \gamma \cdot \pi + 1 - \gamma; \qquad h^- :=  \gamma \cdot \pi \]
bracket $g$ with the same usage\footnote{ if $c_{\gamma, \pi}$ has accuracy $a<1$, then these form an approximate bracket of the same accuracy.}. Crucially, the class of functions $\mathcal{H}$ generated by doing the above for every $(\gamma, \pi) \in \Gamma \times \Pi$ is a class of complexity equivalent to that of the pair $(\Gamma, \Pi),$ since it can be described by the same pair. Thus there is \emph{no loss of expressivity} in restricting attention to the bracketing setup.

\subsection{A Summary of the Conclusions}

The sections above establish the core of this paper via two formal reductions. The following statement encapsulates these. 

\begin{myth} The bracketing formulation of budget learning is equivalent to the gating formulation. Further, solving the bracketing problem is equivalent to solving the two decoupled one-sided learning problems of learning from below and from above. \end{myth}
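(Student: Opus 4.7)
The plan is to split the theorem into its two assertions and verify each by a direct construction, since the excerpt has already laid the groundwork in Sections 2.2 and 2.4. For the equivalence of formulations, I would proceed by exhibiting mutually inverse reductions between brackets and gate--predictor pairs. In the forward direction, given any $\mathcal{H}$-bracket $[h^-, h^+]$ containing $g$, define $\gamma := \ind{h^+ = h^-}$ and $\pi := h^+$ (or equivalently $h^-$ on the region where $\gamma = 1$). Then $c_{\gamma,\pi}(x) = \pi(x)$ agrees with $g$ on $\{\gamma = 1\}$ because $h^- \le g \le h^+$ forces $g = h^+ = h^-$ there, and defaults to $g$ otherwise, so usage equals $\mu(\gamma = 0) = \mu(h^+ \neq h^-) = |[h^-,h^+]|_\mu$. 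In the reverse direction, I would take the construction already written in \S\ref{sec:gating}: given $(\gamma, \pi)$, set $h^+ := \gamma\pi + (1-\gamma)$ and $h^- := \gamma\pi$. A case check on $\gamma \in \{0,1\}$ gives $h^- \le h^+$, and whenever $c_{\gamma,\pi}(x) = g(x)$ we get $h^-(x) \le g(x) \le h^+(x)$, while $\mu(h^+ \neq h^-) = \mu(\gamma = 0)$, so usage is preserved exactly (and in the approximate setting, accuracy transfers verbatim).

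For the expressivity claim, I would argue that the map $(\gamma,\pi) \mapsto (h^-, h^+)$ above realises the bracketing class as an image of the gating class under a simple algebraic combination, so any parameterisation of $(\Gamma, \Pi)$ of a given size induces a parameterisation of $\mathcal{H}$ of comparable size. Conversely, the forward map also uses only basic combinations of $h^+, h^-$. This symmetry, rather than a quantitative VC-dimension computation, is what I would emphasise: the two formulations describe the same hypothesis space up to a bijective reparametrisation, so no expressive power is gained or lost.

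For the decoupling claim, I would invoke the identity derived just before the theorem: for any bracket $[h^-, h^+]$ containing $g$, the event $\{h^+ \neq h^-\}$ decomposes as $\{h^- = 0, g = 1\} \sqcup \{h^+ = 1, g = 0\}$, yielding
\[
\mu(h^+ \neq h^-) \;=\; \mu(h^- \neq g) \;+\; \mu(h^+ \neq g).
\]
Taking infima over feasible $h^-$ and $h^+$ in $\mathcal{H}$ separately is valid precisely because the two summands depend on disjoint variables and the constraint $h^- \le g \le h^+$ splits into the independent constraints $h^- \le g$ and $h^+ \ge g$. Hence $\mathsf{B}(g,\mu,\mathcal{H}) = \mathsf{L}(g,\mu,\mathcal{H}) + \mathsf{L}(1-g,\mu,\mathcal{H})$, and any pair of one-sided optima assembles into a bracketing optimum, with the reverse direction immediate by reading off $h^-, h^+$ from an optimal bracket.

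The main obstacle, if one exists, lies less in the clean exact version than in tracking that the reductions remain faithful in the approximate-bracket regime used in practice: one has to verify that the $\alpha$-accuracy of a bracket and the empirical accuracy of a gate--predictor pair transform into each other without loss, and that the leakage budget $\zeta$ introduced in \S\ref{sec:convex_osl_&_bud_pipeline} splits cleanly across the two one-sided problems. I would handle this by a union bound on the two leakage events $\{h^- > g\}$ and $\{h^+ < g\}$, each bounded by $\zeta/2$, recovering the overall $\zeta$-accuracy claimed.
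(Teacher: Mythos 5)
Your proposal is correct and follows essentially the same route as the paper: the gate--predictor $\leftrightarrow$ bracket reductions via $\gamma = \ind{h^+ = h^-}$, $\pi = h^+$ and $h^+ = \gamma\pi + 1-\gamma$, $h^- = \gamma\pi$, together with the decomposition $\mu(h^+ \neq h^-) = \mu(h^- \neq g) + \mu(h^+ \neq g)$ and the observation that the two one-sided constraints decouple, are exactly the arguments of \S\ref{sec:decoupled_osl_for_bl} and \S\ref{sec:gating} that this theorem summarises. Your extra care about the approximate-bracket regime (splitting the leakage $\zeta$ across the two sides by a union bound) matches what the paper does informally in its footnote and in \S\ref{sec:convex_osl_&_bud_pipeline}.
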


This statement forms the core of this paper, and justifies all further explorations. Since the bracketing formulation is equivalent, we may define budget learnability via it. Further, finite sample analyses for the BL problem may be carried out via the one-sided learning problems. 

\section{Learnability}

As mentioned in the previous paragraph, we define notions of one-sided and budget learnability.%
\subsection{One-sided Learnability}\label{sec:os_learnability}
With only finite data, it is impossible to certify that $h \le f$ for most $h$, rendering the one-sided constraint tricky. We take the PAC approach, and relax this condition by introducing a `leakage parameter' $\lambda$.   
    \begin{def_num}
        A class $\mathcal{H}$ is \emph{one-sided learnable} if for all $(\varepsilon, \delta, \lambda) \in (0,1)^3$, there exists a $m(\varepsilon, \delta, \lambda, \mathcal{H}) < \infty,$ and a scheme $\mathscr{A}: (\mathcal{X} \times \{0,1\})^m \to \mathcal{H}$ such that for any function-measure pair $(g,\mu)$, given $m$ samples of $(X_i, g(X_i) ),$ with $X_i \overset{\textrm{i.i.d.}}{\sim} \mu$, $\mathscr{A}$ produces a function $h \in \mathcal{H}$ such that with probability at least $1-\delta$: \begin{align*} 
            &\mu( g(X) = 0, h(X) = 1) \le \lambda \\
            &\mu( g(X) = 1, h(X) = 0) \le \mathsf{L}(g, \mathcal{H}, \mu) + \varepsilon.
        \end{align*}
    \end{def_num}%
    The above definition closely follows that of PAC learning in the agnostic setting, with the deviations that leakage is explicitly controlled, and that the excess risk control, $\varepsilon$, is on $\mathsf{L}$, i.e.~it is only with respect to \emph{entirely} non-leaking functions. A key shared feature is that one-sided learnability is a property only of the class $\mathcal{H}$, and is agnostic to $(g, \mu).$ 
    
    If the class $\mathcal{H}$ is learnable, then with $m(\varepsilon, \lambda, \delta, \mathcal{H})$ samples we may learn a approximate-bracketing of any $g$ with usage at most $\bud(g, \mu, \mathcal{H}) + 2(\varepsilon + \lambda) $ and accuracy at least $ 1 - 2\lambda$
    
    Let us distinguish the above from the Neyman-Pearson classification setting of \cite{cannon2002learning, scott2005neyman}. The latter can be seen as learning from below, but with explicit control on the \emph{conditional probability} $\mu(h = 1| g= 0).$\footnote{In addition, the targeted control on this is some level $\alpha >0$, not $0$, and a relaxation of the form we use to $\alpha + \lambda$ is also utilised. Further, the property of only comparing against the best classifier at the target level of leakage ($\alpha$ in their case, $0$ in ours) is also shared.} This is too strong for our needs - we are only interested in emulating the behaviour of $g$ \emph{with respect to $\mu$}, and so if $\mu(g = 0) <\lambda,$ then it is fine for us to learn any $h$. This induces the difference that the error rates in the cited papers decay with $\min(n_0, n_1)$, while our setting is simpler and PAC guarantees follow the entire sample size. Nevertheless, our claims on the sample complexity(\S\ref{sec:osl-vc-prop}) are derived similarly to the setting of `NP-ERM' in these papers, including a testing and an optimisation phase.
    
\subsection{Budget Learnability}\label{sec:bud_learn_def}

The bracket-approximation of Def.~\ref{def:bracket_appx} suffers from two problems in the ML context. Firstly, approximation by classes that are \emph{not} one-sided learnable is irrelevant. Secondly, the definition does not control for effectiveness: a bracket-approximation with  $\bud(\mathcal{S}, \mathcal{H}) = 1,$ is not useful - indeed, the trivial class $\mathcal{H} = \{0(x), 1(x)\}$ attains this for every $\mathcal{S}.$ We propose the following to remedy these.
\begin{def_num}\label{def:bud_learn_single_func}
    We say that a set of function-measure pairs $\mathcal{S} = \{ (g,\mu), \dots \}$ is budget-learnable by a class $\mathcal{H}$ if $\mathcal{H}$ is one-sided learnable and \( \bud(\mathcal{S}, \mathcal{H}) < 1. \)\\We also, say that $\mathcal{H}$ can budget learn $\mathcal{S},$ adding ``with budget $\bud$'' if $\bud(\mathcal{S}, \mathcal{H})  \le \bud.$
\end{def_num}
Learning theoretic settings usually require measure independent guarantees, leading to
\begin{def_num}
    A function class $\mathcal{G}$ on the measurable space $(\mathcal{X}, \mathscr{F})$ is said to be budget learnable by a class $\mathcal{H}$ if the set $\mathcal{S} := \mathcal{G} \times \mathcal{M}$ is budget learnable by $\mathcal{H}$, where $\mathcal{M}$ is the set of all probability measures on $(\mathcal{X}, \mathscr{F})$.
\end{def_num}

Notice that strict inequality is required in Def.~\ref{def:bud_learn_single_func}. This is the weakest notion that is relevant in an ML context. Also note the trivial but useful regularity property that if $\mathcal{H}$ is one-sided learnable, then $\mathsf{B}(\mathcal{H} \times \mathcal{M}, \mathcal{H}) = 0$ - indeed, every $h \in \mathcal{H}$ is bracketed by $[h,h].$ 

\section{Theoretical Properties}

This section details some useful consequences of the above definitions, which serve to highlight their utility. 

\subsection{One-sided learnability}\label{sec:osl-vc-prop}

Standard PAC-learning is intrinsically linked to the VC-dimension. The same holds for one-sided learnability.

\begin{myth}\label{thm:one-sided-learn_samp}
        If $\mathcal{H}$ has finite VC-dimension $d$, then it is one-sided learnable with \[  m(\varepsilon, \lambda, \delta, \mathcal{H}) = \widetilde{O} \left( \left( \frac{1}{\lambda} + \frac{1}{\varepsilon^2} \right) (d   + \log(1/\delta) )\right).\]
        Conversely, if $\mathcal{H}$ is one-sided learnable and has VC-dimension $d > 1$, then for $\delta< 1/100,$ \[  m(\varepsilon, \lambda, \delta,\mathcal{H}) > \frac{d-1}{32(\lambda + \varepsilon)} .\] Particularly, one-sided learnable classes must have finite VC-dimension.
\end{myth}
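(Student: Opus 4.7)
The proof splits into achievability and a converse. For achievability I would adapt the NP-ERM template of \cite{scott2005neyman} into a two-phase scheme: first, a relative VC deviation inequality carves out an empirical feasible set of hypotheses with near-zero leakage; second, standard ERM for the miss-rate is run over that set. For the converse I would construct a family of hard instances supported on a VC-shattered tuple together with a distribution placing a small mass on each of its ``informative'' coordinates, so that with too few samples some coordinates are never observed, and the learner is forced into a leakage-versus-miss trade-off that no single $h$ can satisfy.

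\textbf{Achievability.} Split the sample into disjoint pieces $S_1, S_2$ of sizes $m_1 = \widetilde{O}((d + \log(1/\delta))/\lambda)$ and $m_2 = \widetilde{O}((d + \log(1/\delta))/\varepsilon^2)$. The set systems $\mathcal{A} := \{\{g=0\} \cap \{h=1\} : h \in \mathcal{H}\}$ and $\mathcal{B} := \{\{g=1\} \cap \{h=0\} : h \in \mathcal{H}\}$ each have VC dimension at most $d$, being traces of $\mathcal{H}$ against a fixed set. On $S_1$, invoke Vapnik's relative deviation inequality for $\mathcal{A}$: with probability $1-\delta/2$, every $h$ with $\hat{\mu}_1(g=0,h=1) \le \lambda/2$ also has true leakage $\le \lambda$. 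Let $\widehat{\mathcal{F}}$ denote this empirical feasible set; it contains $h\equiv 0 \in \mathcal{H}$, and deterministically contains every $h$ with zero true leakage. On $S_2$, invoke additive VC uniform convergence for $\mathcal{B}$, so that $|\hat{\mu}_2(g=1,h=0) - \mu(g=1,h=0)| \le \varepsilon/2$ uniformly over $\mathcal{H}$ with probability $1-\delta/2$. Output $\hat{h} := \argmin_{h\in\widehat{\mathcal{F}}} \hat{\mu}_2(g=1,h=0)$; its leakage bound follows from membership in $\widehat{\mathcal{F}}$, and a routine ERM chain, comparing $\hat{h}$ against any exact $\mathsf{L}$-minimiser (which itself lies in $\widehat{\mathcal{F}}$), gives $\mu(g=1,\hat{h}=0) \le \mathsf{L}(g,\mu,\mathcal{H}) + \varepsilon$.

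\textbf{Converse and main obstacle.} Fix a shattered tuple $\{x_1,\ldots,x_d\}$, let $\mu$ place mass $p := 8(\lambda+\varepsilon)/(d-1)$ on each of $x_2,\ldots,x_d$ and the remainder on $x_1$, and draw a random target $g$ by fixing $g(x_1) = 1$ and independently flipping unbiased coins at each $x_i$ for $i \ge 2$; shattering implies $\mathsf{L}(g,\mu,\mathcal{H}) = 0$ almost surely. With $m = (d-1)/(32(\lambda+\varepsilon))$ samples, the expected number of hits inside $\{x_2,\ldots,x_d\}$ is $8m(\lambda+\varepsilon) = (d-1)/4$, so by Markov the untouched index set $U$ satisfies $|U| \ge (d-1)/2$ with probability $\ge 1/2$. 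Conditional on the sample and on $U$, independence of the coin flips keeps $g|_U$ uniform. Partition the learner's output on $U$ into $S = \{i \in U : h(x_i) = 1\}$ and $T = U \setminus S$; the leakage and miss contributions are $p|S \cap \{g=0\}|$ and $p|T \cap \{g=1\}|$, whose expectations sum to $p|U|/2 \ge 2(\lambda+\varepsilon)$. Hence at least one of $p|S|/2 \ge \lambda$ or $p|T|/2 \ge \varepsilon$ must hold, and by the Binomial median inequality the corresponding constraint is violated with conditional probability $\ge 1/2$. Composing, the unconditional failure probability is $\ge 1/4$, so averaging over the prior exhibits a target $g$ on which the learner fails with probability $> 1/100 \ge \delta$, contradicting the hypothesized one-sided learnability. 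The main obstacle lies in this converse: one must verify that after observing the sample, the posterior on $g|_U$ stays product-uniform (using the product prior and a constant per-coordinate mass), and then compose the Markov and Binomial-median relaxations tightly enough that the constant $32$ in the denominator survives. The achievability, by contrast, is comparatively mechanical once the appropriate relative deviation bound is cited.
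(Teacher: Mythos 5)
Your proof is correct and follows essentially the same route as the paper's. The achievability argument is the paper's two-phase ``test then optimise'' scheme almost verbatim: a first sample carves out an empirical low-leakage set (which provably contains every truly non-leaking $h$, hence the $\mathsf{L}$-minimiser), and a second sample runs ERM for the miss-rate over that set; the only difference is that you invoke Vapnik's relative deviation inequality to get the $1/\lambda$ rate for VC classes directly, where the paper first proves the finite-class case via Bernstein bounds on binomial tails and then lifts to VC classes through Sauer--Shelah equivalence classes --- both are standard and give the same $\widetilde{O}$ bound. For the converse, the paper is slicker: it observes that a one-sided learner applied to a target $g \in \mathcal{H}$ (where $\mathsf{L}=0$) is automatically a \emph{realisable PAC learner} with accuracy $\lambda+\varepsilon$, since leakage plus miss bounds total disagreement, and then simply cites the standard realisable lower bound $m > \tfrac{d-1}{32(\lambda+\varepsilon)}$. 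Your construction (shattered set, one heavy anchor point, mass $p$ on $d-1$ points with unbiased random labels, Markov on the unvisited set, binomial-median argument) is precisely the standard proof of that cited lower bound, unrolled and adapted to split the error between the leakage and miss constraints; it is valid, though you should note it requires $8(\lambda+\varepsilon)\le 1$ for $\mu$ to be a probability measure (the regime where the bound is trivial otherwise needs a one-line separate remark), a detail the reduction-plus-citation route absorbs automatically. The reduction also makes the final sentence of the theorem (``one-sided learnable $\Rightarrow$ finite VC'') immediate, which your direct argument delivers as well since it works for every finite $d$.
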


The proof is left to Appx.~\ref{appx:osl_pac_proofs}. The lower bound is proved via a reduction to realisable PAC learning, while the upper bound's proof is similar to that for agnostic PAC learning, with the  modification of adding a test that eliminates functions that leak too much.

The point of the Theorem \ref{thm:one-sided-learn_samp} is to illustrate that sample complexity analyses for our formulation can be derived via standard approaches in learning theory. Alternate analyses via, e.g., Rademacher complexty or covering numbers are also straightforward (Appx.~\ref{appx:rad-cov}).

\subsection{Budget Learnability}\label{sec:bud_learnability_bones}

The key question of budget learning is one of bias: what classes of functions can be budget learned by low complexity classes? This section offers some partial results towards an answer.

Before we begin, the (big) question of how one measures complexity itself remains. We take a simple approach - since one-sided learnability itself requires finite VC-dimension, we call $\mathcal{H}$ low complexity if $\vc(\mathcal{H})$ is small. Certainly VC dimension is a crude notion of complexity. Nevertheless this study leads to interesting bounds, and outlines how one may give theoretical analyses for more realistic settings that may be pursued in further work. 

Importantly, we do not expect any one class to be able to meaningfully budget learn \emph{all} classes of a given complexity. This follows since the definition of budget learnability implies that if sets $\mathcal{S}_1, \mathcal{S}_2$ of function-measure pairs are budget learnable, then so is $\mathcal{S}_1 \cup \mathcal{S}_2.$ Such unions can lead to arbitrary increase in complexity, which must weaken the budget attained.\footnote{Formally, this finite union property and the lower bound Thm.~\ref{thm:bl_vc_lower_bounds} part $\mathrm{(i)}$ indicate that if $\mathcal{H}$ can budget learn \emph{all} classes of VC dimension $D$ on all measures with budget $1 - c$ for any $c > 0$ that depends only on $D$ or $\mathcal{X}$, but not on $\mathcal{H},$ then $\forall k \in \mathbb{N}, \vc\left(\mathcal{H}\right) \ge  C kD$ for a constant $C$. } Thus, at the very least, the classes $\mathcal{H}$ must depend on $\mathcal{G}$, although we would like them to not depend on the measure.

\subsubsection{Budget Learnability of Regular Classes}\label{sec:regular_classes_bud_learn}
The class of H\"{o}lder smooth functions is a classical regularity assumption in non-parametric statistics. In this section, we define a natural analogue for $\{0,1\}$-valued functions, and discuss its budget learnability by low VC dimension classes. For simplicity, we restrict the input domain to the compact set $\mathcal{X} = [0,1]^p.$ We use $\mathrm{Vol}$ to denote the Lebesgue measure on $\mathcal{X}.$
    \begin{defi}
        Let $g$ be a $\{0,1\}$-valued function. A partition $\mathscr{P}$ of $\mathcal{X}$ is said to be aligned with $g$ if each set $\Pi \in \mathscr{P}$ has connected interior, and if $g$ is a constant on each such set.
    \end{defi} 
    We define a notion of regularity for partitions below. Recall that a $p$-dimensional rectangle is a $p$-fold product of $1$-D intervals.
    \begin{defi}
        A partition $\mathscr{P}$ is said to be $V$-regular if every part $\Pi \in \mathscr{P}$ contains a rectangle $R_\Pi$ such that $\mathrm{Vol}(R_\Pi) \ge V$ and $\mathrm{Vol}(\Pi \setminus R_\Pi) < V$. 
    \end{defi}
    The above partitions are well aligned with rectangles in the ambient space. The notion of regularity for function classes we choose to study demands that each function in the class has an associated `nice' partition.    
    \begin{def_num}
        We say that a class of functions $\mathcal{G} = \{g:[0,1]^p \to \{0,1\}\}$ is $V$-regular if for each $g \in \mathcal{G},$ there exists a $V$-regular partition aligned with $g$.
    \end{def_num}    
    Essentially the above demands that the local structure induced by any $g$ can be neatly expressed. This condition is satisfied by many natural function classes on the bulk of their support - An important example is the class of $g$ of the form $\mathds{1}\{G(x) > 0\}$ for some H\"{o}lder smooth $G$ that admit a margin condition with respect to the Lebesgue measure (see, e.g.~\cite{mammen1999smooth, tsybakov2004optimal}). Indeed, if $\{G\}$ satisfies the margin condition $\mathrm{Vol}(|G|<t) \le \eta$, and is $L$-Lipschitz, then $\{ \ind{G > 0} \}$ is $V$-regular on a region of mass $\ge 1-\eta$ with $V \ge (2t/L)^p.$
    
    We offer the obvious class that can budget learn $V$-regular functions over sufficiently nice measures - rectangles. For $\kappa \in \mathbb{N},$ we define the class $\mathcal{R}_\kappa^{0,1}$ to consist of functions $h$ that may be parametrised by $k$ rectangles $\{R_i\}$ and a label $s \in \{0, 1\},$ and take the form \[  h( x ; \{R_i\}, s) =  s \ind{x \in \cup R_i} + (1-s) \ind{x \not\in \cup R_i}.\]   
    The class $\mathcal{R}_\kappa^{0,1}$ above has VC dimension at most $2p(\kappa + 1)$. The theorem below offers bounds on the budgets required to learn $V$-regular classes in $p$ dimensions:    
    \begin{myth}\label{thm:v_regular}
        Let $\kappa := \lfloor d/2p - 1\rfloor \le 1/V$. Suppose $\mu \ll \mathrm{Vol},$ and $\frac{\mathrm{d} \mu}{\mathrm{d}\,\mathrm{Vol}} \ge \rho,$ and $\mathcal{G}$ is $V$-regular. Then $\vc(\mathcal{R}_{\kappa}^{0,1}) = d,$ and it can budget learn $\mathcal{G} \times \{\mu\}$ with \[  \bud\left(\mathcal{G}\times \{\mu\}, \mathcal{R}_{\kappa}^{0,1}\right) \le 1 - \rho \left\lfloor \frac{d}{2p} - 1 \right\rfloor V/3.\]
        Conversely, for $V \le 1/2,$ there exists a $V$-regular class $\mathcal{G}'$ such that if $\vc\left(\mathcal{H}\right) \le d,$ then \[  \bud( \mathcal{G'} \times \{\mathrm{Vol}\}, \mathcal{H}) \ge 1 - \sqrt{3 V d\log(2e/V)}.\] 
    \end{myth}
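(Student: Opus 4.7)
The theorem asserts three claims---the VC dimension of $\mathcal{R}_\kappa^{0,1}$, an upper bound on the bracketing budget, and a nearly matching lower bound---which I would address separately. The VC dimension equality $\vc(\mathcal{R}_\kappa^{0,1}) = d$ follows from the parameter count noted before the theorem statement (each of $\kappa$ rectangles contributes $2p$ boundary coordinates, plus a sign bit), paired with a matching shattered-set construction that uses $2p$ points to encode each rectangle's coordinates across $\kappa$ disjoint regions.

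For the upper bound, given $g \in \mathcal{G}$ with an aligned $V$-regular partition $\mathscr{P} = \{\Pi_i\}_{i=1}^N$ and inner rectangles $R_i \subseteq \Pi_i$, I would split part indices by label as $I_b := \{i : g|_{\Pi_i} = b\}$ and pick $S_b \subseteq I_b$ as the top $\min(\kappa, |I_b|)$ parts ordered by $\mathrm{Vol}(R_i)$. Set $h^-(x) := \mathds{1}\{x \in \bigcup_{i \in S_1} R_i\}$ (with $s^- = 1$) and $h^+(x) := \mathds{1}\{x \notin \bigcup_{j \in S_0} R_j\}$ (with $s^+ = 0$); both lie in $\mathcal{R}_\kappa^{0,1}$ and one verifies $h^- \le g \le h^+$ pointwise. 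By disjointness of the parts, the agreement set is exactly $(\bigcup_{i \in S_1} R_i) \sqcup (\bigcup_{j \in S_0} R_j)$, whose volume is at least $V\min(\kappa, N)$ (each inner rectangle has volume $\ge V$) and also at least $\sum_{i=1}^N \mathrm{Vol}(R_i) \ge 1 - NV$ when $N \le \kappa$ (since $\mathrm{Vol}(\Pi_i \setminus R_i) < V$ summed over the $N$ parts). A case split on whether $N \ge \kappa/3$, together with the hypothesis $\kappa V \le 1$, yields agreement volume $\ge V\kappa/3$ in all regimes; multiplying by $\rho$ via $\mu \ge \rho\,\mathrm{Vol}$ closes the upper bound.

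For the lower bound, I would take $\mathcal{G}'$ to be the class of all $\{0,1\}$-valued functions constant on the cells of an axis-aligned grid of $n := 1/V$ cells of volume $V$; this class is $V$-regular with the grid itself as the aligned partition. I would then run a probabilistic argument with $g$ drawn uniformly from $\mathcal{G}'$, writing $\epsilon_c := 2g_c - 1$ for the resulting i.i.d.\ Rademacher signs. By the decoupling of \S\ref{sec:decoupled_osl_for_bl}, $\bud(g) = 1 - \mathsf{L}(g,\mu,\mathcal{H}) - \mathsf{L}(1-g,\mu,\mathcal{H}) =: 1 - L_1(g) - L_0(g)$, so it suffices to upper-bound $\mathbb{E}_g[L_1 + L_0]$. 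The key relaxation is that for $h \le g$ the measure $\mu(\{h=1\}\cap\{g=0\})$ vanishes, forcing
\[ L_1(g) \;=\; \sup_{h \le g}\, \sum_c p_c(h)\,\epsilon_c \;\le\; \sup_{h \in \mathcal{H}}\, \sum_c p_c(h)\,\epsilon_c, \]
where $p_c(h) := \mathrm{Vol}(\{h=1\} \cap c) \in [0,V]$. Applying Massart's finite-class lemma, with $\sum_c p_c(h)^2 \le V \sum_c p_c(h) \le V$ and an effective-size bound $\le (en/d)^d$ on the relevant projections of $\mathcal{H}$ (via Sauer--Shelah on a sufficiently fine intra-cell sub-sample), yields $\mathbb{E}_g[L_1] \le \sqrt{2Vd\log(en/d)}$, with the symmetric statement for $L_0$. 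Summing and consolidating constants produces $\mathbb{E}_g[\bud] \ge 1 - \sqrt{3Vd\log(2e/V)}$, whence existence of a witness $g \in \mathcal{G}'$ follows.

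The main technical obstacle is the last step of the lower bound: converting $\vc(\mathcal{H}) \le d$ into a usable complexity bound on the continuous cell-mass class $\{(p_c(h))_c\}_{h \in \mathcal{H}} \subset [0,V]^n$, which is not natively a binary hypothesis class. A clean route is to observe that only the inner product $\sum_c p_c(h)\epsilon_c$ matters, so one may discretize each $p_c(h)$ to multiples of $V/k$ for a modest $k$ and appeal to a binary Sauer--Shelah bound on a $kn$-point sub-grid, absorbing the discretization slack into the logarithmic factor. The geometric upper-bound argument is comparatively routine, though the loss of $1/3$ in the final constant is tied to the case-splitting described above and would require a more delicate balanced-selection argument to tighten.
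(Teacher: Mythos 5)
Your upper bound is essentially the paper's argument: both select, within each label class of the aligned partition, the $\min(\kappa,|I_b|)$ inner rectangles of largest mass and use them to build $h^-$ and $h^+$ in $\mathcal{R}_\kappa^{0,1}$. Your case analysis (all parts captured when $N\le\kappa$, giving agreement $\ge\max(NV,1-NV)\ge 1/2$; at least $\min(\kappa,N)=\kappa$ rectangles captured otherwise) is valid and in fact a little cleaner than the paper's route, which instead passes to a \emph{maximal} $V$-regular partition to force $N\ge 1/3V\ge\kappa/3$ directly. The treatment of $\rho$ and the VC claim are as in the paper (which likewise only sketches the VC count).

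The lower bound is where you genuinely diverge. The paper uses the same witness class $\mathcal{G}'$ (all colourings of a grid of $\lfloor 1/V\rfloor$ cells), but then argues combinatorially: after reducing to functions constant on cells, a budget of $1-\Delta/N$ forces, for \emph{every} $g\in\{0,1\}^N$, some $h\in\mathcal{H}$ agreeing with $g$ on $\ge(N+\Delta)/2$ points, so $\mathcal{H}$ is a Hamming $\tfrac{N-\Delta}{2}$-cover of the cube; a volume bound plus Sauer--Shelah then forces $\vc(\mathcal{H})\gtrsim\Delta^2/(N\log N)$. Your route --- averaging over a uniformly random $g$ and bounding the captured mass by a Rademacher complexity of the cell-mass vectors --- is a legitimate alternative that buys a cleaner probabilistic statement (one also has to be a bit careful with the approximation-from-above term, where the relaxation introduces an extra $-V\sum_c\epsilon_c$ fluctuation, though this is lower order). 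Note a notational slip: $\bud=\mathsf{L}(g,\mu,\mathcal{H})+\mathsf{L}(1-g,\mu,\mathcal{H})$, not $1$ minus that sum; your identity $\bud=1-L_1-L_0$ is only correct because your $L_1,L_0$ are in fact the \emph{captured} masses, not the inefficiencies $\mathsf{L}$.

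The genuine gap is the step you yourself flag, and your proposed fix does not close it. Massart's lemma needs a finite (or covered) class, and the class $\{(p_c(h))_c:h\in\mathcal{H}\}\subset[0,V]^n$ is controlled by $\vc(\mathcal{H})$ only through an $L_1(\mathrm{Vol})$-covering argument: applying Sauer--Shelah to the dichotomies of $\mathcal{H}$ on a finite sub-grid of $kn$ points says nothing about the volumes $p_c(h)$, since two sets can agree on any finite point set while having arbitrarily different Lebesgue measures. What you need is either (a) Haussler's bound $N(\epsilon,\mathcal{H},L_1(\mathrm{Vol}))\le (C/\epsilon)^{Cd}$ for VC classes, applied with $\epsilon\approx V$, after which Massart on the net gives your $\sqrt{Vd\log(1/V)}$ rate (this is the ``covering number'' route the paper itself acknowledges as the source of the square root); or (b) the paper's preliminary reduction to functions constant on cells, which makes the whole problem finite and combinatorial. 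Either repair is standard, but as written the bridge from $\vc(\mathcal{H})\le d$ to the finite-class bound is missing.
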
    
    For the Lipschitz functions with margin discussed above, $V$ scales as $\widetilde{\Theta}((C/L)^{-p}),$ where $L$ is the bound on the gradient, and $C$ is some constant. The above shows that all such classes are learnable with budget $1-\Omega(1)$ and VC-dim.~$d$ iff $d \gtrsim L^{-p + O(\log p )}$ 
\subsubsection{Budget Learnability of bounded VC classes}

Typically the function classes $\mathcal{G}$ that a cloud can implement are not nearly as rich as the set of all $V$-regular functions. This merits the investigation of classes with bounded (but large) complexity. Following the lines of study above, we investigate the budget learnability of finite VC classes, assuming $\vc\left(\mathcal{G}\right) =D$ for large $D$.

Unlike covering numbers, bracketing numbers do not, in general, admit control for VC classes (e.g.~constructions of \cite{vanHandel2013_arxiv} and \cite{Malykhin2012}). This renders the budget learnability problem for bounded VC classes difficult. This is further complicated by the fact that we are interested in whether such classes can be meaningfully bracketed by \emph{low-complexity} classes. Such questions are non-trivial to answer, and, frankly speaking, we do not solve the same. However, we offer two lower bounds, illustrating that if one wishes to non-trivially budget learn such classes with budget $1 - \Omega(1),$ and with VC dim.~$d$, then $d$ must grow as $\Omega(D).$ Further, we a present a few simple, natural cases where one \emph{can} budget learn, irrespective of measure, with budget $\approx 1 - d/D.$ We briefly discuss an open question that these classes stimulate.

\subsection{Lower Bounds}
For simplicity, we assume that $\mathcal{X} = [1:N]$ for some $N \gg 1,$ and that $\mathscr{F} = 2^{\mathcal{X}}.$ The classes $\mathcal{G}, \mathcal{H}$ can then be identified as members of $2^{\mathscr{F}}.$
Our lower bounds are captured by the following statements\newpage

\begin{myth}\label{thm:bl_vc_lower_bounds}$ $
\begin{enumerate}[topsep = 0pt, parsep = 1pt, itemsep = 0pt]
    \item[\emph{(i)}] (Varying measure) Let $\mathcal{G}$ be any class with $\vc(\mathcal{G}) = D,$ and $\mathcal{H}$ with $\vc(\mathcal{H}) = d$. Then there exists a measure $\mu$ such that \[ \bud(\mathcal{G} \times \{\mu\}, \mathcal{H})  \ge 1 - \sqrt{ 3\frac{d}{D} \log \frac{eD}{d}}.\]
    \item[\emph{(ii)}] (Uniform measure) Let $N \in \mathbb{N},$ be a multiple of $D$ such that $D \le N/8e.$ There exists a class $\mathcal{G}$ of VC-dimension $D$ on $[1:N]$such that for any class $\mathcal{H},$ if $\bud\left( \mathcal{G} \times \{ \mathrm{Unif}([1:N])\}, \mathcal{H}\right) \le \bud \in (D/N, 1/4e),$ then \[ \vc\left( \mathcal{H}\right) \ge D \frac{\log(1/4e\bud)}{\log (e N)}.\]  
\end{enumerate}

\end{myth}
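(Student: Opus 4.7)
The plan for both parts is a counting argument: bound the number of $\mathcal{H}$-brackets of $\mu$-size at most $\bud$ via Sauer--Shelah, and compare against a hard family in $\mathcal{G}$ that must be covered.

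For part (i), fix a set $S = \{x_1, \ldots, x_D\}$ shattered by $\mathcal{G}$ and take $\mu$ uniform on $S$, so every labeling in $\{0,1\}^S$ arises as $g|_S$ for some $g \in \mathcal{G}$ and must be bracketed. An $\mathcal{H}$-bracket $[h^-, h^+]$ restricted to $S$ is determined by its ``free'' set $F = \{x \in S : h^-(x) < h^+(x)\}$ (of size at most $\bud D$ when the bracket has $\mu$-size $\leq \bud$) together with the common value on $S \setminus F$, so it is consistent with at most $2^{|F|} \leq 2^{\bud D}$ labelings. Sauer--Shelah bounds the number of restriction-pairs by $(eD/d)^{2d}$, giving the covering condition $(eD/d)^{2d}\cdot 2^{\bud D} \geq 2^D$, which rearranges to $1 - \bud \leq (2d/D)\log_2(eD/d)$. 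This implies the stated $\bud \geq 1 - \sqrt{3(d/D)\log(eD/d)}$ wherever that square-root bound is non-vacuous.

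For part (ii), a one-sided reduction removes a factor from the counting: if $g \in [h^-, h^+]$ with $\mu(h^- \neq h^+) \leq \bud$, then $g = h^-$ on the committed set of measure $\geq 1 - \bud$, whence $\mu(g \neq h^-) \leq \bud$. So any bracketing of $\mathcal{G}$ with budget $\bud$ produces, for each $g \in \mathcal{G}$, some $h_g^- \in \mathcal{H}$ that $\bud$-approximates $g$ in $L^1(\mathrm{Unif})$. If $\mathcal{G}$ is designed as a $2\bud$-packing (every distinct pair has $\mu$-distance $> 2\bud$), then the triangle inequality makes $g \mapsto h_g^-$ injective, so (identifying $\mathcal{H}$ with its projection onto $[1:N]$) we get $|\mathcal{H}| \geq |\mathcal{G}|$. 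Sauer--Shelah's $|\mathcal{H}| \leq (eN/d)^d$ then gives $d\log(eN) \geq \log|\mathcal{G}|$, yielding the stated bound once $|\mathcal{G}| \geq (1/4e\bud)^D$.

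The main obstacle is the construction of such a $\mathcal{G}$: a class on $[1:N]$ of VC-dimension exactly $D$ that is $2\bud$-packed under the uniform measure and has at least $(1/4e\bud)^D$ functions. Haussler's packing lemma shows the target size is well within feasibility (the extremal $2\bud$-packing of a VC-$D$ class has size $\Theta((1/\bud)^D)$). An explicit approach partitions $[1:N]$ into $D$ equal blocks and, within each block, takes indicator functions of sub-intervals at $\sim 1/(4e\bud)$ equally spaced thresholds; the product class has VC-dimension $1$ per block summing to $D$, meets the size target, and a short calculation shows $2\bud$-separation precisely in the theorem's range $\bud \in (D/N, 1/4e)$ under $D \leq N/8e$.
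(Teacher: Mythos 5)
Your argument is correct but takes a genuinely different route from the paper's, and is in fact stronger. The paper reduces bracketing to Hamming covering (for each $g$, one of $h^{\pm}$ agrees with $g$ on at least $(D+\Delta)/2$ shattered points), lower-bounds the covering number of the cube by a volume/Bernstein computation, and compares with Sauer--Shelah; the binomial-tail step is exactly where the square root enters. You instead count brackets directly: at most $|\mathcal{H}_{|S}|^2 \le (eD/d)^{2d}$ restricted pairs, each containing at most $2^{\bud D}$ labelings, set against the $2^D$ labelings that must be covered. This gives $1-\bud \le (2d/D)\log_2(eD/d)$, linear rather than square-root in $t:=(d/D)\log(eD/d)$, and one checks that $2t/\ln 2 \le \sqrt{3t}$ whenever $t\le 1/3$, i.e.\ throughout the range where the stated bound is non-vacuous. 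The paper's closing section explicitly flags the square root as an artefact of the covering-number route; your counting removes it for this part.

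\textbf{Part (ii).} Your reduction is sound: on the committed set $g=h^-$, so $\mathrm{Unif}(g\neq h^-)\le \bud$, a $2\bud$-packed subfamily of $\mathcal{G}$ injects into $\mathcal{H}_{|[1:N]}$, and Sauer--Shelah finishes. (The paper's majority-vote reduction gets radius $\bud/2$ and so needs only a $\bud$-packing, which is where its $4e$ comes from; yours would give roughly $6e$ inside the log --- a constant.) The genuine gap is the construction of $\mathcal{G}$. The equally-spaced product-of-thresholds grid with $\sim 1/(4e\bud)$ thresholds per block is \emph{not} $2\bud$-separated once $D>2e$: two codewords differing at adjacent thresholds in a single block are at normalized distance only $4e\bud/D$. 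Any grid/product construction has minimum distance equal to one coordinate's spacing, so it yields at most about $(1/(2D\bud))^D$ codewords, short of the target by a factor of order $(D/e)^D$. The appeal to ``Haussler's packing lemma'' does not close this: that lemma is the universal \emph{upper} bound $O((1/\epsilon)^D)$ on packing numbers of VC-$D$ classes, whereas what is needed is a matching \emph{lower} bound for a specific class --- precisely the nontrivial content the paper imports from Haussler's 1995 sphere-packing construction, which shows the tensorized threshold class has $k$-packing number at least $(1+N/D)^D/\bigl(2^D\binom{k+D}{D}\bigr)$ by letting differences spread across many blocks. (A random-thresholds-plus-deletion argument would also work.) Finally, mind the quantifier order: $\mathcal{G}$ must be fixed before $\bud$; this is repaired by taking $\mathcal{G}$ to be the full tensorized class and extracting the packed subfamily for each $\bud$, since a budget bound for $\mathcal{G}$ implies one for every subfamily.
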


The above bounds, while not very effective, indicate that to get small budget it is necessary that $d$ grows linearly with $D$. %

\subsection{Some natural budget learnable classes}\label{sec:bud_conj}

We present three simple examples: \begin{itemize}
    \item Sparse VC class: on the space $\mathcal{X} = [1:N],$ let $\mathcal{G} = \binom{\mathcal{X}}{\le D}$. Then this $\mathcal{G}$ can be budget learned by the class $\binom{\mathcal{X}}{\le d}$ of VC dimension $d$ with budget $1 - d/D.$
   \item Convex Polygons in the plane: Let $\mathcal{X} = \mathbb{R}^2,$ and $\mathcal{P}_D$ be the set of concepts defined by marking the convex hull of any $D$ points as $b \in \{0,1\},$ and its exterior by $1-b.$ \cite{Takacs07thevapnik-chervonenkis} shows that $\mathcal{P}_D$ has VC dimension $2D + 2$. For $d \ge 4,$ the class $\mathcal{P}_d$ (of VC dimension $2d + 2$) can budget learn $\mathcal{P}_D$ with budget $ 1 - \lceil \frac{D}{d-2}\rceil^{-1} \approx 1 - (d/D)$ for $D \gg d$.
    \item Tensorisation of thresholds: Let $\mathcal{X} = [1:N],$ and let $\mathcal{G}$ be defined as the following class: Let $\mathcal{G}_0$ be the class on $[1:N/D]$ of the form $\ind{x \ge k}$ for some $k$. We let $\mathcal{G} = \sum_{i = 1}^D g_i$ where $g_i: [1 + iN/D, (i+1)N/D] \to \{0,1\}$ are of the form $g_i(x) = g_i'(x - iN/D)$ for some $g_i' \in \mathcal{G}_0.$ Again, there exists a $\mathcal{H} \subset \mathcal{G}$ of VC dimension $d$ that can budget learn $\mathcal{G}$ with budget $1 - d/D.$   
\end{itemize}
Proofs for the above claims are left to Appendix \ref{appx:proofs_vc_bounds}. There are two important features of the above classes, and their budget approximation
\begin{enumerate}
    \item For each of the classes, there is a \emph{subset} of these classes that has small VC dimension and can budget learn at (roughly) the budget $1 - d/D$. This subclass can be chosen irrespective of measure.
    \item These classes are all extremal in the sense of satisfying the sandwich lemma with equality. In the first two cases they are maximal, while the third class is ample (see, e.g.~\cite{chalopin2018unlabeled}).    
\end{enumerate}
Maximal classes are known to admit unlabelled compression schemes of size equal to their VC dimension, and have many regularity properties - for instance, subclasses formed by restricting the class to some subset of the input are also maximal (see \cite{chalopin2018unlabeled} and references within). It is an interesting open question whether maximal classes of VC dimension $D$ can be budget learned by \emph{subclasses} of VC dimension $d$ with usage $1- cd/D$ for some constant $c$.
\section{Experiments}\label{sec:exp_main}

This section presents empirical work implmenting the BL via bracketing schema on standard machine learning data. We explore three binary classification tasks\begin{enumerate}
    \item A simple \emph{synthetic} task in $\mathbb{R}^2$ that allows easy visualization.
    \item The \emph{MNIST odd/even} task, which requires discrimination between odd and even MNIST digits.
    \item The \emph{CIFAR random pair} task, which requires discrimination between a pair of randomly chosen CIFAR-10 classes.\footnote{Note: supervision is provided \emph{after} this choice. That is, if class $a$ and $b$ are chosen, then the algorithms are provided the class $a$ and class $b$ data.}
\end{enumerate}

 The models considered are presented in Table \ref{table:tasks_and_classes}. Each of the local classes chosen are far sparser than the corresponding cloud classes, which are taken to be the state of the art models for these tasks. 
 \begin{table}[h]
 \centering
\resizebox{\columnwidth}{!}{%

 \begin{tabular}[b]{c||cc|cc}
\multicolumn{1}{c}{\multirow{2}{*}{\textbf{\begin{tabular}[c]{@{}c@{}}Task \end{tabular}}}} & \multicolumn{1}{c}{\multirow{2}{*}{\textbf{\begin{tabular}[c]{@{}c@{}}Cloud \\ Classifier \end{tabular}}}} & \multicolumn{1}{c}{\multirow{2}{*}{\textbf{\begin{tabular}[c]{@{}c@{}}Cloud \\ Accuracy \end{tabular}}}} & \multicolumn{1}{c}{\multirow{2}{*}{\textbf{\begin{tabular}[c]{@{}c@{}}Local \\ Classifier \end{tabular}}}} & \multicolumn{1}{c}{\multirow{2}{*}{\textbf{\begin{tabular}[c]{@{}c@{}}Local \\ Accuracy \end{tabular}}}} \\ \\\hline \hline  & & & & \\
\multirow{3}{*}{ \textbf{ \begin{tabular}[c]{@{}c@{}} Synthetic \end{tabular} }}  & \multirow{2}{*}{ \begin{tabular}[c]{@{}c@{}} 4th order curve \end{tabular} } & \multirow{2}{*}{ 1.00 } & \multirow{2}{*}{ \begin{tabular}[c]{@{}c@{}} Axis-aligned Conic Sections \\ (2nd order curves) \end{tabular}  } & \multirow{2}{*}{ 0.840 } \\
& & & & \\ & & & & \\
\multirow{3}{*}{ \textbf{ \begin{tabular}[c]{@{}c@{}} MNIST \\ Odd/Even \end{tabular} }} &  \multirow{3}{*}{ \begin{tabular}[c]{@{}c@{}} LeNet \\ 2conv + maxpool layers\\43.7K params \end{tabular} }  & \multirow{3}{*}{ 0.995 } & \multirow{3}{*}{ \begin{tabular}[c]{@{}c@{}} Linear \\ 1.57K params \end{tabular} } & \multirow{3}{*}{ 0.898 } \\
& & & & \\ & & & & \\  & & & &\\
\multirow{3}{*}{ \textbf{ \begin{tabular}[c]{@{}c@{}} CIFAR \\ Random Pair \end{tabular} }} & \multirow{3}{*}{ \begin{tabular}[c]{@{}c@{}}RESNET-32 \\0.46M params \end{tabular} } & \multirow{3}{*}{ 0.984 } & \multirow{3}{*}{ \begin{tabular}[c]{@{}c@{}}  Narrow LeNet \\ 2conv + maxpool layers\\1.63K params \end{tabular} } & \multirow{3}{*}{ 0.909 } \\
& & & & \\  & & & &\\
\end{tabular}
 } \caption{\small Classification tasks studied, and the corresponding cloud and local classifier classes selected. Cloud accuracy is reported with respect to true labels, but local accuracy is with respect to cloud labels.  \label{table:tasks_and_classes} }%
 \end{table}

 Bracketing is implemented as described in \S\ref{sec:convex_osl_&_bud_pipeline}. See Appx.~\ref{appx:exp} for detailed descriptions. We compare the bracketing method to four existing approaches. 
\begin{enumerate}
    \item \emph{Sum relaxation} (Sum Relax.) \cite{cortes2016learning}, which relaxes the gating formulation to a sum as $\pi \gamma \le (\pi + \gamma)/2$, and then further relaxes this to real valued outputs and convex surrogate losses.
    \item \emph{Alternating Minimiation} (Alt.~Min.) \cite{nan2017adaptive}, which introduces an auxiliary function $u$ to serve as proxy for $\gamma$ during training, replacing $\gamma \pi$ by $u \pi$. The algorithm then optimises a loss over $(\gamma, \pi, u)$ via alternating minimisation over $(\gamma, \pi)$ and then $u$, using a KL penalty $D(u \|\gamma)$ to promote $u \approx \gamma$.
    \item \emph{Selective Net} (Sel.~Net.) \cite{geifman2019selectivenet}, which is an architectural modification for deep networks that essentially optimises the raw gating setup without any relaxation via SGD. %
    \item \emph{Local Thresholding} (Local Thresh.). This is a na\"{i}ve baseline - one learns a local classifier, and then rejects points if the entropy of its (soft) output at the point is too high.
\end{enumerate}

In line with the focus of the paper, we only report solutions at high target accuracy ($\ge98\%$). We note that local thresholding strictly outperforms the sum relaxation and alternating minimisation methods. The results are reported in Fig.~\ref{fig:synthetic_experiments} and Table \ref{table:usages}. Observe that the bracketing methods show a consistent gain in usages at high accuracy, with reductions in usage by a factor of 1.2 to 1.5 times over the best competitors which are local thresholding for MNIST and Sel. Net. for CIFAR. In addition, the usages themselves are in the range $20$-$40\%$ in most of the cases.

It is important to contextualise these usage numbers. In our choice of cloud and edge models, we are demanding that the edge models punch far above their weight when we try to budget learn the stated cloud classifiers - indeed, the edge models do not come even close to the clouds in standard accuracy. However, in Table \ref{table:usages}, we see usages of $20$-$40\%$ at high accuracies, and relative operational lifetimes (inverse of usage, see, e.g.~\cite{zhu_iot}) of 2.5-5. For settings like IoT devices, where communication dominates energy costs, this is a significant gain in operational lifetimes of the prediction pipeline at near SOTA accuracy. 

These results demonstrate that the bracketing methodology is practically implementable and effective, with the resulting budget learners clearly outperforming existing methods on the studied tasks. 
\begin{figure}
\centering
\begin{subfigure}{.19\textwidth}
    \centering \includegraphics[width = \textwidth]{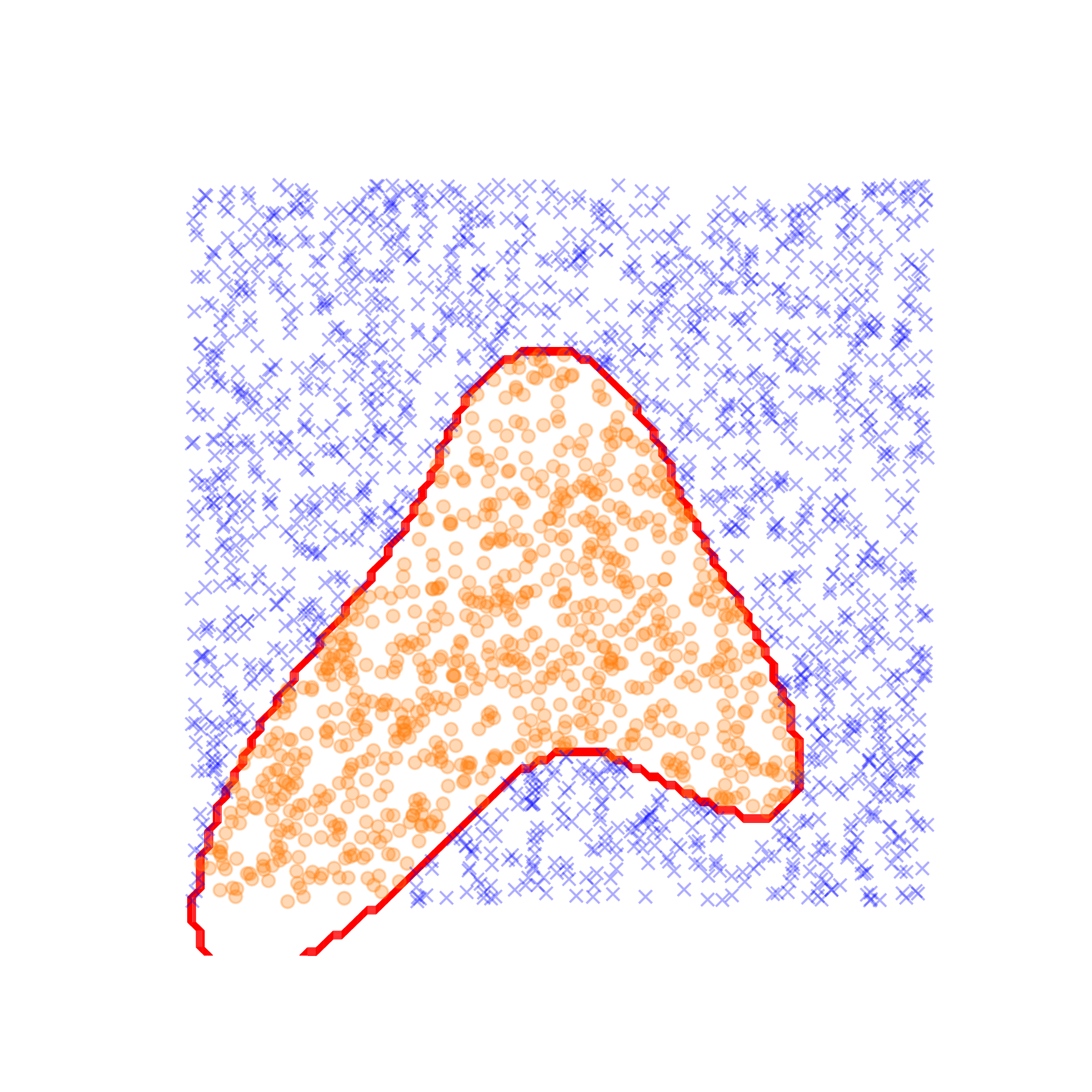}
    \caption{Cloud boundary, and the training set.}\label{fig:syn1}
\end{subfigure}   %
\begin{subfigure}{.19\textwidth}
    \centering \includegraphics[width = \textwidth]{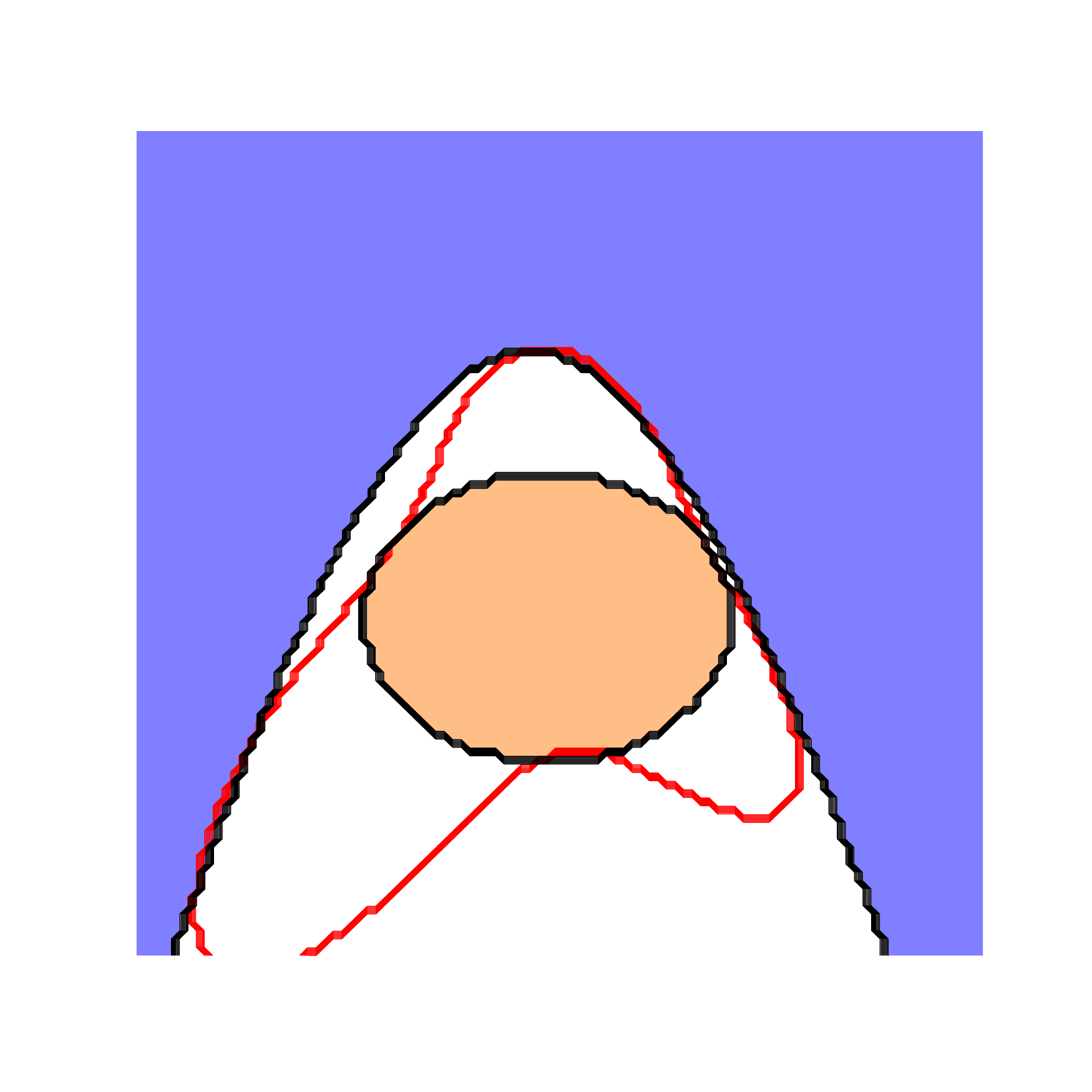}%
    \caption{Bracketing\\ (Acc: 0.997; Usg:0.295)}
\end{subfigure} 
\begin{subfigure}{.19\textwidth}
   \centering  \includegraphics[width = \textwidth]{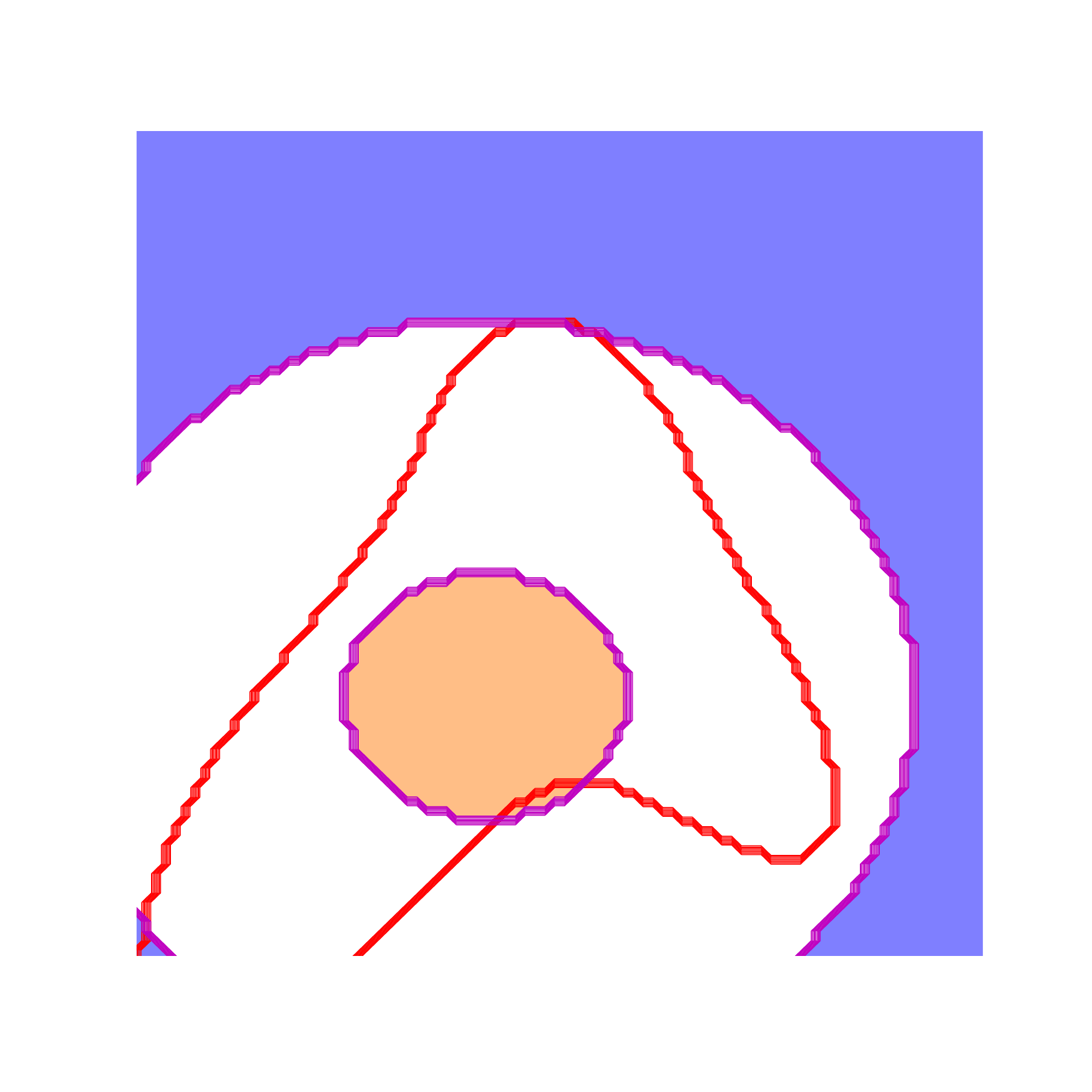}%
    \caption{Local thresholding \\ (Acc:0.997; Usg:0.537)}
\end{subfigure}   %
\begin{subfigure}{.19\textwidth}
    \centering \includegraphics[width = \textwidth]{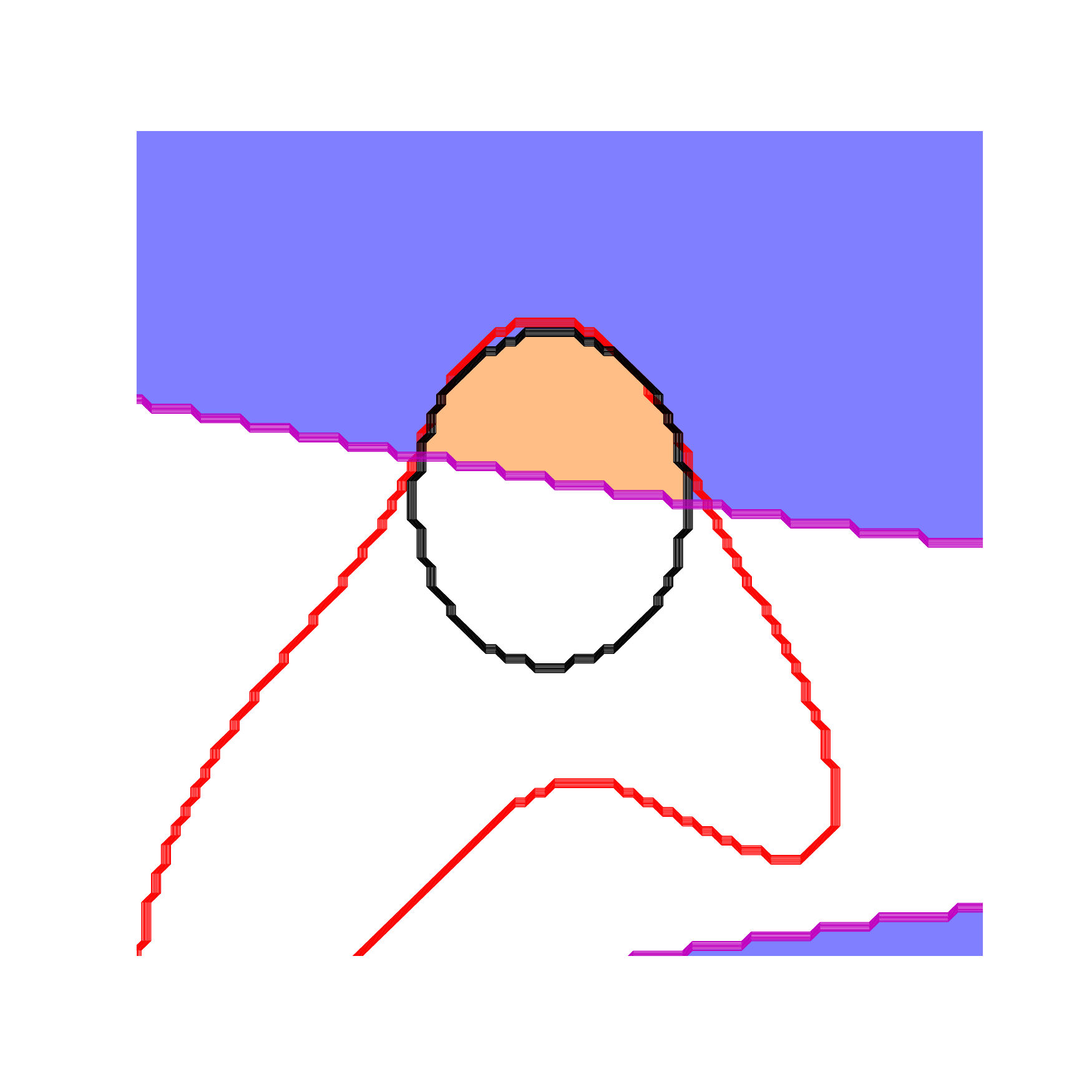}%
    \caption{Alt. Min.\\ (Acc:0.996; Usg:0.563)}
\end{subfigure}   %
\begin{subfigure}{.19\textwidth}
   \centering \includegraphics[width=\textwidth]{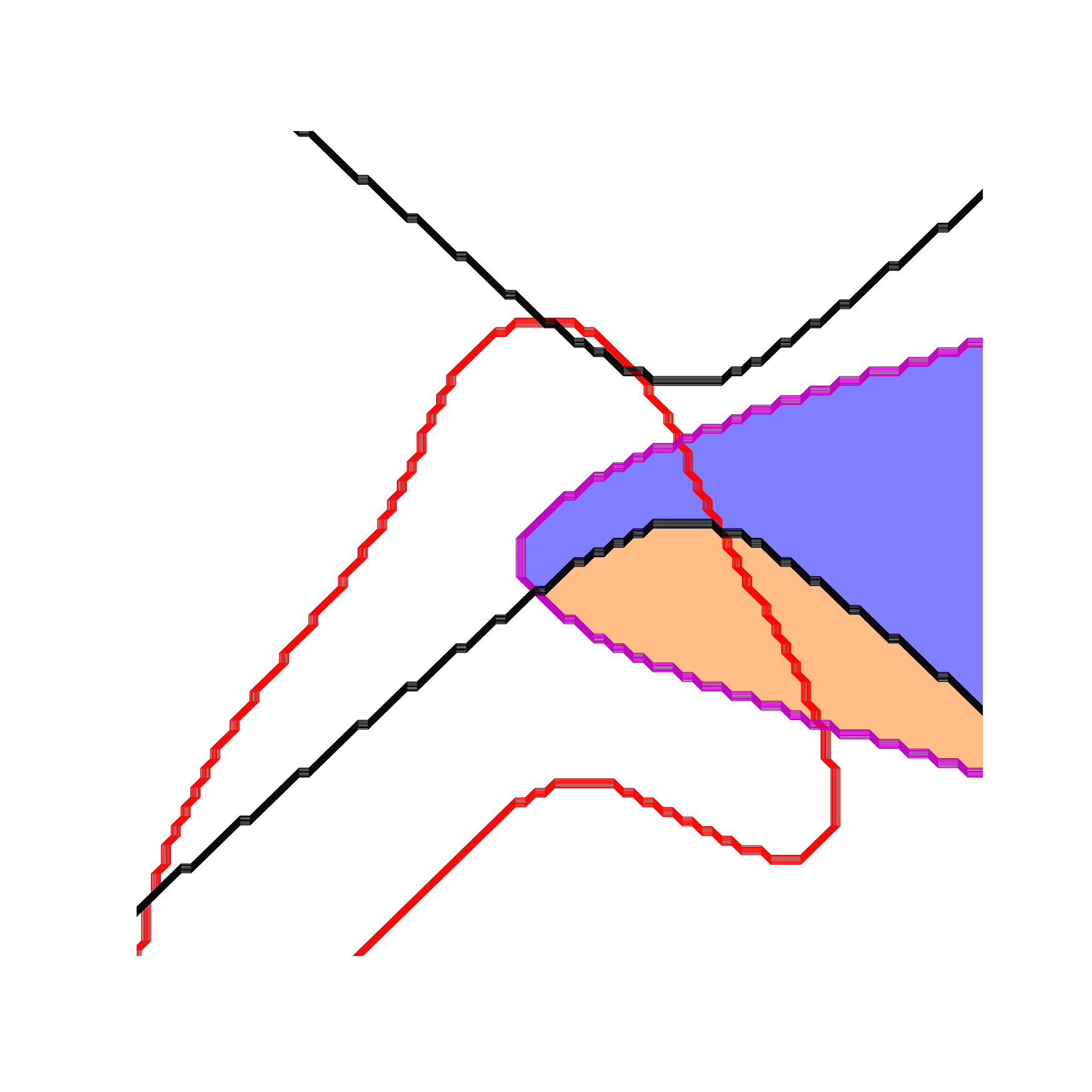}%
    \caption{Sum Relax.\\ (Acc:0.948; Usg:0.819)}
\end{subfigure}   %

\caption{\small Visualisation of classifiers resulting from the various approaches on a synthetic dataset. The red curve indicates the decision boundary of the cloud classifier, and figure (a) indicates this, and also shows the training set used as coloured dots. Figures (b)-(e) depict the budget learners learnt by various approaches. In these, the white region is the set of inputs on which the cloud is queried, while the orange and blue regions describe the decisions of the local predictor when it is queried. The black lines indicate decision boundaries of the various classifiers, and in figures (c)-(e), the magenta line indicates the boundary of the gate. Minimum usage solutions with accuracy at least $99.5\%$ (when found) are presented.}
\label{fig:synthetic_experiments}%
\end{figure}

\begin{table}
\centering
    \resizebox{\columnwidth}{!}{%

\begin{tabular}{c|c|cc|cc|cc|cc|cc|c}
\multicolumn{1}{c}{\multirow{2}{*}{\textbf{\begin{tabular}[c]{@{}c@{}} Task\end{tabular}}}} &\multicolumn{1}{c}{\multirow{2}{*}{\textbf{\begin{tabular}[c]{@{}c@{}}Target \\ Acc.\end{tabular}}}} & \multicolumn{2}{c}{\textbf{Bracketing}} & \multicolumn{2}{c}{\textbf{Local Thr.}}                              & \multicolumn{2}{c}{\textbf{Alt. Min.}} & \multicolumn{2}{c}{\textbf{Sum relax.}} & \multicolumn{2}{c}{\textbf{Sel. Net.}}& \multicolumn{1}{c}{\multirow{2}{*}{\textbf{Gain}}} \\
         \multicolumn{1}{c}{} & \multicolumn{1}{c}{\textbf{}}  & \multicolumn{1}{c}{\textbf{Usg.}} & \multicolumn{1}{c}{\textbf{ROL}}  & \multicolumn{1}{c}{\textbf{Usg.}} & \multicolumn{1}{c}{\textbf{ROL}}  & \multicolumn{1}{c}{\textbf{Usg.}} & \multicolumn{1}{c}{\textbf{ROL}}  & \multicolumn{1}{c}{\textbf{Usg.}} & \multicolumn{1}{c}{\textbf{ROL}}  & \multicolumn{1}{c}{\textbf{Usg.}} & \multicolumn{1}{c}{\textbf{ROL}}  & \multicolumn{1}{c}{}                               \\
                                    \hline\hline
\multirow{3}{*}{ \textbf{\begin{tabular}[c]{@{}c@{}} MNIST\\Odd/Even\end{tabular}} } 
& 0.995 & 0.457 & 2.19 & 0.653 & 1.53 & 0.830 & 1.20 & 0.785 & 1.27 & 0.658 & 1.52 & 1.431$\times$ \\
& 0.990 & 0.387 & 2.58 & 0.515 & 1.94 & 0.740 & 1.35 & 0.651 & 1.54 & 0.544 & 1.84 & 1.332$\times$ \\
& 0.980 & 0.299 & 3.35 & 0.358 & 2.79 & 0.604 & 1.66 & 0.651 & 1.54 & 0.423 & 2.37 & 1.199$\times$ \\ \hline
\multirow{3}{*}{ \textbf{\begin{tabular}[c]{@{}c@{}} CIFAR\\Random Pair\end{tabular}} } 
& 0.995 & 0.363 & 4.01 & 0.510 & 2.25 & 0.854 & 1.19 & 0.620 & 2.07 & 0.436 & 3.04 &  1.280$\times$\\
& 0.990 & 0.294 & 5.66 & 0.399 & 3.41 & 0.754 & 1.40 & 0.488 & 3.31 & 0.347 & 4.30 &  1.265$\times$\\
& 0.980 & 0.214 & 9.97 & 0.276 & 6.38 & 0.611 & 1.87 & 0.345 & 5.81 & 0.257 & 11.67 &  1.195$\times$\\
\end{tabular}

} \caption{\small Performances on BL tasks studied. Usage (usg.) and \emph{relative operational lifetimes} (ROL), a common metric in BL which is the inverse of usage, are reported. In each case, the models attain the target accuracy (with respect to cloud) to less than $0.5\%$ error - see Table \ref{table:usages_with_acc} in Appx.~\ref{appx:exp_tables}. \emph{Gain} is the factor by which the bracketing usages are smaller than the best competitor. The CIFAR entries are averaged over 10 runs of the random choices. The results for all runs for the best two methods for are reported in Table \ref{cifar_binary} in Appx.~\ref{appx:exp_tables}. Note that these are averages of each entry for each run and so average ROL is \emph{not} the same as the inverse of the average usage.\label{table:usages} }%
\end{table}

\section{Directions for Future Work}

We think that the bracketing formulation of BL described above is rather nice. It shows practical promise, and gives a clean framework in which to theoretically study BL. A number of problems in BL are wide open. We informally state a few of these that arise naturally from the considerations in this paper below, in the hope that you, dear reader, might want to think about them.

\emph{Extentions to Learning with Abstention}: can the bracketing approach be applied directly to LwA, in the setting where one does not want exact agreement with a given concept, but may accept a small extra risk, and perhaps for noisy data? Our suggestions for and implementation of the empirically relevant bracketing mechanisms (\S \ref{sec:convex_osl_&_bud_pipeline},\ref{sec:exp_main}) already heuristically step towards this via the explicit leakage parameter $\zeta$, but this direction must be formalised.

\emph{Extentions to Multiclass Settings}: The bracketing definitions rely intrinsically on the binary class structure. This \emph{can} be extended to multi-class settings via one-hot encoding, as suggested in \S\ref{sec:convex_osl_&_bud_pipeline}, or via constructing $\log_2 C$ binary bracketing problems using a bit encoding of the classes. Practically, however, this has the consequence of blowing up the number of Lagrange multipliers one needs to consider. To fully exploit these, an efficient way to allocate these and globally optimise them must be developed.

\emph{Practically Relevant Classes}: Modelling of the constraints at the edge and the power of the cloud can yield practically relevant settings of the classes $\cal H, \cal G$ in the above. Perhaps with these in hand, one can develop bracket approximability results that are practically relevant, and, hopefully, more optimisitc even in the worst case than the above.

\emph{Deeper Empirical Study}: of both our, and other, BL methodologies is of intrinsic and of practical interest. What are the right benchmarks and datasets for BL as studied here? How do these methods do empirically for settings that matter in practice? 

Lastly, let us mention a couple of technical problems that are insufficiently dealt with in the above. First, we remind the reader about the intriguing question about budget learnability of maximal VC classes by their subclasses. Second, our lower bounds are loose - there's a square root in them that we don't think belongs. They are also not very effective. The square-root comes from the fact that our analysis for these proceeds via covering numbers. Can bounds on bracketing numbers be given more directly, at least in simple cases? Can one remove the dependence on $N$?

\subsubsection*{Acknowledgements} Our thanks to Pengkai Zhu for help with implementing experiments. This work was supported partly by the National Science Foundation Grant  1527618, the Office of Naval Research Grant N0014-18-1-2257 and by a gift from the ARM corporation.

\printbibliography

\newpage

\begin{appendix}
\noindent \textbf{\LARGE Appendices}

\section{Proofs Omitted from the Main Text}
\subsection{Proof of Theorem \ref{thm:one-sided-learn_samp}} \label{appx:osl_pac_proofs}

\begin{proof}[Proof of lower bound] Notice that since $\mathcal{H}$ is one-sided learnable, it can learn any $h \in \mathcal{H}$ from below with $\mathsf{L} = 0.$ Thus, given $m(\varepsilon, \delta, \lambda, \mathcal{H})$, and samples $(X_i, h(X_i))$ for any $h \in \mathcal{H},$ the scheme $\mathscr{A}$ recovers a function $\widehat{h}$ such that \begin{align*} \mu( h =0, \widehat{h} = 1) &\le \lambda \\ \mu( h = 0, \widehat{h} = 1) &\le \varepsilon. \end{align*} But then $\mu(\widehat{h} \neq h) \le \lambda + \varepsilon$ - i.e. $\mathscr{A}$ also serves as a realisable PAC learner with excess risk bounded by $\lambda + \varepsilon$. Thus, standard lower bounds for realisable PAC-learning can be invoked, for instance, that of \S3.4 from the book \cite{mohribook}.
\end{proof}

\begin{proof}[Proof of Upper Bound]
    We provide a scheme showing the same. To begin with, suppose that $\mathcal{H}$ is a finite class. Fix $g, \mu$, and let $\mathcal{H}_\eta := \{ h \in \mathcal{H}: \mu( g(X)= 0, h(X) = 1) \le \eta\}$. For finite $\mathcal{H}$, the scheme proceeds in two steps:
             \begin{enumerate}
                 \item Testing: using $m_1$ samples (where $m_1$ is to be specified later), compute the empirical masses \( \widehat{\ell}(h) := \widehat{\mu}\{ h(X) = 1, g(X) = 0\} \) for every $h \in \mathcal{H}$. Let \(\widehat{\mathcal{H}}_\lambda := \{h : \ell(h) < \lambda/2\}.\)               
                 \item Optimisation: Using $m_2$ samples (where $m_2$ is to be specified later), compute the empirical masses \(\widehat{\mathsf{L}(h)} := \widehat{\mu}(h(X) = 0, g(X) = 1)\) for every $h \in \widehat{\mathcal{H}}_\lambda$. Return any \(\widehat{h} \in {\arg\!\min}_{\widehat{\mathcal{H}}_\lambda} \widehat{\mathsf{L}}(h)\). 
             \end{enumerate}
            
            The correctness of the above procedure is demonstrated by the following lemmata:
            
            \begin{mylem}\label{lemma:osl_vc_ub_testing}
                If \[ m_1 \ge \frac{24}{\lambda}  \log(4|\mathcal{H}|/\delta) ,\] then with probability at least $1-\delta/2$, \[ \mathcal{H}_{\lambda/4} \subset \widehat{\mathcal{H}}_\lambda \subset \mathcal{H}_{3\lambda/4}. \] 
            \end{mylem}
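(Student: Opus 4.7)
The plan is to treat the two desired inclusions as pointwise concentration statements for the Bernoulli random variable $Y_h := \mathds{1}\{h(X) = 1, g(X) = 0\}$, whose mean is exactly $\ell(h)$ and whose empirical mean over the $m_1$ testing samples is $\widehat{\ell}(h)$. The two inclusions amount to the following implications for each $h$: if $\ell(h) \le \lambda/4$ then $\widehat{\ell}(h) < \lambda/2$, and contrapositively, if $\ell(h) > 3\lambda/4$ then $\widehat{\ell}(h) \ge \lambda/2$. Because the stated rate is $1/\lambda$ rather than $1/\lambda^{2}$, additive Hoeffding is not tight enough and the argument should go through the multiplicative Chernoff bound, which exploits the smallness of $\ell(h)$.

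For the first inclusion, fix $h$ with $\ell(h) \le \lambda/4$. Writing $S = m_1 \widehat{\ell}(h) \sim \mathrm{Bin}(m_1, \ell(h))$ and noting that $m_1 \lambda/2 \ge 2 m_1 \ell(h)$, I would apply the upper-tail Chernoff bound in the form $\Pr[S \ge (1+\eta)\mathbb{E}S] \le \exp(-\eta^{2}\mathbb{E}S/3)$ with $\eta = 1$ after first reducing to the worst case $\ell(h) = \lambda/4$ (the tail probability is monotone in the mean for a fixed threshold). This yields
\[
\Pr[\widehat{\ell}(h) \ge \lambda/2] \le \exp(-m_1 \lambda/12).
\]
For the second inclusion, fix $h$ with $\ell(h) > 3\lambda/4$, so $\mathbb{E}S > 3 m_1 \lambda/4$ and the threshold $m_1 \lambda/2$ lies below $(1 - 1/3)\mathbb{E}S$. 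The lower-tail Chernoff bound $\Pr[S \le (1-\eta)\mathbb{E}S] \le \exp(-\eta^{2} \mathbb{E}S/2)$ with $\eta = 1/3$ then gives
\[
\Pr[\widehat{\ell}(h) < \lambda/2] \le \exp(-m_1 \lambda/24),
\]
which is the weaker of the two rates and hence the binding constraint.

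To finish, I would union-bound these two bad events over all $h \in \mathcal{H}$, producing a total failure probability of at most $2|\mathcal{H}| \exp(-m_1 \lambda/24)$. Demanding this be at most $\delta/2$ gives exactly the stated sufficient condition $m_1 \ge (24/\lambda) \log(4|\mathcal{H}|/\delta)$, and on the complementary good event both inclusions $\mathcal{H}_{\lambda/4} \subset \widehat{\mathcal{H}}_\lambda \subset \mathcal{H}_{3\lambda/4}$ hold simultaneously.

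There is no real obstacle here; the only care needed is in choosing the right Chernoff form (multiplicative rather than additive) so that the sample-complexity scales as $1/\lambda$ rather than $1/\lambda^{2}$, and in tracking the constants so that $24$ (from the worse lower-tail side) is the constant that survives. The subsequent extension to infinite $\mathcal{H}$ presumably replaces $|\mathcal{H}|$ by the growth function $\Pi_{\mathcal{H}}(2m_1)$ via a symmetrisation argument, incurring the $d$ factor in the VC bound; that step will be handled in the companion lemma for the optimisation phase.
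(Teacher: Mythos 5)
Your proposal is correct and follows essentially the same route as the paper's proof: reduce each inclusion to a binomial tail event at the boundary means $\lambda/4$ and $3\lambda/4$ (via monotonicity/stochastic domination), apply a variance-sensitive tail bound to get the $1/\lambda$ rate, and union bound. The only cosmetic difference is that you use multiplicative Chernoff where the paper uses Bernstein's inequality; both give the same binding exponent $\exp(-m_1\lambda/24)$ on the lower-tail side and hence the same constant $24$.
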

            The above is proved after the conclusion of this argument.
            \begin{mylem}\label{lemma:osl_vc_ub_opt}
                If \[ m_2 \ge \frac{2}{\varepsilon^2} \log(4|\mathcal{H}|/\delta) ,\] then with probability at least $1-\delta/2$, \[ |\widehat{\mathsf{L}(h)} - \mu(h = 0, g = 1) | \le \varepsilon\] simultaneously for all $h \in \widehat{\mathcal{H}}_\lambda.$
                \begin{proof}
                    The claim follows by Hoeffding's inequality and the union bound, noting that $|\widehat{\mathcal{H}}| \le |\mathcal{H}|.$
                \end{proof}
            \end{mylem}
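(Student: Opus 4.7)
The plan is to reduce the statement to a standard uniform-concentration argument by proving the stronger claim that $|\widehat{\mathsf{L}}(h) - \mu(h = 0, g = 1)| \le \varepsilon$ holds simultaneously for \emph{every} $h \in \mathcal{H}$. Since $\widehat{\mathcal{H}}_\lambda \subseteq \mathcal{H}$ as a set (regardless of which realisation of the first-phase samples produced it), this immediately yields the desired statement for the random subclass, and it sidesteps any measurability/independence worries about the interaction between the testing phase of Lemma \ref{lemma:osl_vc_ub_testing} and the optimisation phase analysed here.

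Concretely, I would fix $h \in \mathcal{H}$ and write $\widehat{\mathsf{L}}(h) = m_2^{-1} \sum_{i=1}^{m_2} Z_i$, where $Z_i := \ind{h(X_i) = 0, g(X_i) = 1} \in \{0,1\}$ are i.i.d.\ with mean $\mu(h=0, g=1)$. Hoeffding's inequality then gives
\[
\Pr\bigl(\,|\widehat{\mathsf{L}}(h) - \mu(h=0, g=1)| > \varepsilon\,\bigr) \;\le\; 2\exp(-2m_2 \varepsilon^2).
\]
A union bound over $h \in \mathcal{H}$ bounds the probability that uniform concentration fails by $2|\mathcal{H}|\exp(-2m_2\varepsilon^2)$. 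Demanding that this be at most $\delta/2$ is equivalent to $m_2 \ge \tfrac{1}{2\varepsilon^2}\log(4|\mathcal{H}|/\delta)$, which is implied (with room to spare) by the hypothesis $m_2 \ge \tfrac{2}{\varepsilon^2}\log(4|\mathcal{H}|/\delta)$. The conclusion for $\widehat{\mathcal{H}}_\lambda$ follows from the containment $\widehat{\mathcal{H}}_\lambda \subseteq \mathcal{H}$.

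There is no real obstacle: the argument is purely Hoeffding plus union bound, and the bound $|\widehat{\mathcal{H}}_\lambda| \le |\mathcal{H}|$ used in the paper's one-line proof sketch is exactly the observation that the data-dependent subclass is still a subclass of the full finite class. The only point that might warrant a sentence of care is the reminder that we prove the uniform bound over the deterministic set $\mathcal{H}$ before restricting to the random set $\widehat{\mathcal{H}}_\lambda$, so that independence between the two phases of sampling is not required for the union bound to apply.
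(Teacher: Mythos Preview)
Your proposal is correct and follows essentially the same approach as the paper: Hoeffding's inequality on each $h$ followed by a union bound, using $|\widehat{\mathcal{H}}_\lambda| \le |\mathcal{H}|$. Your extra care in taking the union bound over the deterministic class $\mathcal{H}$ before restricting to the random subclass is a clean way to phrase the argument, and your observation that the stated $m_2 \ge \tfrac{2}{\varepsilon^2}\log(4|\mathcal{H}|/\delta)$ is stronger than the $\tfrac{1}{2\varepsilon^2}\log(4|\mathcal{H}|/\delta)$ actually needed is also correct.
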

            
            Thus, for finite classes, the claim follows (with $d = \log|\mathcal{H}|$) by an application of the union bound, and noting that $\mathcal{H}_0 = \{h : \mu(h = 1, g= 0) = 0\} \subset  \{ h: \mu(h = 1, g = 0) \le \lambda/4 \} = \mathcal{H}_{\lambda/4}.$
            
            We now appeal to the standard generalisation from finite classes to finite VC-dimension classes. By the Sauer-Shelah lemma (see, e.g., \S3.3 of \cite{mohribook}), with $m$ samples, a class of VC-dimension $d$ breaks into at most $(em/d)^d$ equivalence classes of functions that agree on all data points, and the losses of functions in each equivalence class can be simultaneously evaluated and share the same generalisation guarantees. Let $\mathcal{H}'$ be formed by selecting one representative from each such class. We may run the above procedure for $\mathcal{H}'$, and draw the same conclusions so long as \begin{align*}
                m &\ge m_1 + m_2 \\
                m_1 &\ge \frac{24}{\lambda} \left(d\log(em/d) + \log(4/\delta)\right) \\
                m_2 &\ge \frac{1}{2\varepsilon^2} \left(d\log(em/d) + \log(4/\delta)\right)
            \end{align*}
            
            By crudely upper bounding the right hand sides above, this can be attained if \[ \frac{m}{\log em} \ge 24 \left(\frac{1}{\lambda} + \frac{1}{\varepsilon^2} \right) \left( d  + \log(4/\delta)\right),\] and the conclusion follows on noting that for $v \ge 2, $ $u \ge 4v\log(v) \implies u/\log(eu) \ge v.$ \qedhere           
            
    \end{proof}
    It remains to show Lemma \ref{lemma:osl_vc_ub_testing}.
            \begin{proof}[Proof of Lemma \ref{lemma:osl_vc_ub_testing}]
                
                   Let $\ell(h) := \mu(h = 1, g = 0).$ Note that $m_1 \widehat{\ell(h)}$ is a $\mathrm{Binomial}(m_1, \ell(h) )$ random variable for each $h$. Further, for $p\le q,$ the distribution $\mathrm{Binomial}(n,q)$ stochastically dominates $\mathrm{Binomial}(n,p)$.  
                    
                    Thus, for any $h: \ell(h) < \lambda/4,$ \[ \mu^{\otimes m_1} (\widehat{\ell}(h) \ge \lambda/2) \le P_{U \sim \mathrm{Binomial}(m_1, \lambda/4)}(U \ge m_1\lambda/2 ) \le \exp( - 3m_1\lambda/32),\] where the final relation is due to Bernstein's inequality. 
                    
                    Similarly, for any $h : \ell(h) > 3\lambda/4,$  \[ \mu^{\otimes m_1}(\widehat{\ell(h)} \le \lambda/2) \le P_{U \sim \mathrm{Binomial}(m_1, 3\lambda/4)}(U \le m_1\lambda/2 ) \le \exp( -m_1\lambda/24).\]
                    
                    For $m_1 \ge 24/\lambda \log(4|\mathcal{H}|/\delta)$, each of the above can be further bounded by $\delta/4|\mathcal{H}|$. The claim follows by the union bound.\qedhere
                                                        
                \end{proof}

    \subsubsection{Alternate Generalisation Analyses}\label{appx:rad-cov}
    
    Note that the above proof utilises the finite VC property only to assert that on a finite sample, the hypotheses to be considered can be reduced to a finite number. Instead of the VC theoretic argument, one can then immediately give analyses via, say, $L_1$ covering numbers of the sets induced by the functions. Similarly, instead of beginning with finite hypotheses, we may instead directly uniformly control the generalisation error of the estimates for each function via the Rademacher complexity of the class $\mathcal{H},$ thus replacing Lemmas \ref{lemma:osl_vc_ub_opt}, \ref{lemma:osl_vc_ub_testing} by a bound of the form $m(\varepsilon, \lambda, \delta, \mathcal{H}) \le \inf\{m: \mathfrak{R}_m(\mathcal{H}) + \sqrt{2\log(2/\delta)/m} \le \min(\lambda, \varepsilon)/2\},$ and further extensions via empirical Rademacher complexity. In addition, one can utilise more sophisticated analyses for more sophisticated algorithms. 
    
    The point of all this is to underscore that once one adopts the bracketing and OSL setup, generalisation guarantees, and thus sample complexity bounds, follow the standard approaches in learning theory. This is not to say that these analyses may be trivial - for instance, in the above we have not shown tight sample complexity bounds at all.

    \subsection{Proof of Theorem \ref{thm:v_regular}}
    
    \begin{proof}[Proof of Upper Bound] We note that if $\frac{\mathrm{d}\mu}{\mathrm{d}\mathrm{Vol}} \ge \rho,$ and we can locally predict in a region of volume $P,$ then we can immediately locally predict in a region of $\mu$-mass $\rho P.$ Thus, it suffices to argue the claim for the Lebesgue mass on $[0,1]^p$. 
    
    Since we have access to $\kappa$ cuboids in $\mathcal{R}_{\kappa}^{0,1},$ we can capture any $\kappa$ of the cuboids induced in the minimal partition aligned with $g$ for any $g \in \mathcal{G}$. In particular, when approximating from below, we will choose $h^-$ to be $1$ on some $\kappa$ of the cuboids contained in $\{ g = 1\}$, and $0$ otherwise, and similarly for approximating from above (denoted $h^+$). Naturally, we will `capture' the cuboids with the biggest volume (more generally, biggest $\mu$ mass). Notice that this construction trivially yields $h^- \le g \le h^+$.
               
    To finish the argument, fix an arbitrary $g \in \mathcal{G}$. Let $\mathscr{P}$ be a partition aligned with $g$ that is $V$-regular, and further, has the largest total number of parts possible.\footnote{such a partition exists because $V$-regularity implies that the number of parts is uniformly bounded by $1/V$.} Suppose that there are $\mathscr{P}_1$ parts in $\mathscr{P}$ on which $g $ is 1, and $\mathscr{P}_0$ on which it is $0$. By the maximality, it must be the case that each rectangle contained in each part of $\mathscr{P}$ has volume less than $2V$, since otherwise we can split this part while maintaining $V$-regularity. Further, since the mass contained outside of the rectangle in each part is at most $V$, it follows that $3V(\mathscr{P}_0 + \mathscr{P}_1) \ge 1$ by the union bound. Thus, $\mathscr{P}_0 + \mathscr{P}_1 \ge 1/3V \ge \kappa/3.$
        
    Now, by the above construction, we can capture at least $(\min(\kappa, \mathscr{P}_0) + \min(\kappa, \mathscr{P}_1) )V$ volume of the space, which exceeds $\kappa V/3$. \qedhere
        
    \end{proof}

    \begin{proof}[Proof of Lower Bound] 
    
            Divide $[0,1]^p$ into $N = \lfloor 1/V \rfloor$ congruent, disjoint rectangles. Note that since the faces of these rectangles have codimension $\ge 1,$ they have volume $0$. Thus, we need not worry about how they are assigned in the following, and we will omit these irrelevant details in the interest of clarity. 
            
            We set $\mathcal{G}$ to be the class of $2^N$ functions obtained by colouring each of the $N$ boxes as $0$ or $1$. This class is trivially $V$-regular.
            
            Now, notice that any time a function $h$ is approximating a function $g \in \mathcal{G}$ from above, it should either attain the value $0$ on a whole box, or attain the value $1$ on a whole box - if $g$ is $1$ on a box, then $h$ is forced to be $1$. If $g$ is instead $0$, and $h$ dips down to take the value $0$ at any point, then rising up to $1$ is lossy in that it increases the loss $\mathsf{L}(h,g,\mathrm{Vol})$ while offering no reduction in the expressivity of the class $\mathcal{H}$. Thus, we may restrict attention to classes $\mathcal{H}$ such that all functions contained in them are constant over the boxes described. 
            
            Given the above setup, the entire problem is equivalently described by restricting the domains of $\mathcal{G}, \mathcal{H}$ to the centres of the above boxes, and the measure $\mathrm{Vol}$ to the uniform measure over these centres. We henceforth work in this space. The domain of the functions in $\mathcal{G},\mathcal{H}$ is now the abstract set $[1:N].$
            
            Suppose every $g \in \mathcal{G}$ can be budget learned with budget at most $1 - \Delta/N$ in this measure (where $\Delta$ is some integer because the space is discrete and the distribution is rational). Let $(h_g^+, h_g^-)$ be the appropriate bracketing functions that minimise budget for $g$, and let $\mathcal{I}_g$ be the points where $h_g^+ = h_g^-.$ The budget constraint forces that $|\mathcal{I}_g| \ge \Delta.$ Notice that outside of $\mathcal{I}_g,$ $h_g^+$ must take the value $1$ and $h_g^-$ must take the value $0$ - indeed, if $h_g^+(i) $ was $0$, then since $0\le h_g^-(i) \le h_g^+(i),$ $h_g^-(i) = 0,$ and then $i \in \mathcal{I}_g.$ 
            
            But, on $[1:N] \sim \mathcal{I}_g,$ $g$ must either be predominantly $1$ or $0$, and then respectively, must agree with $h_g^+$ or $h_g^-$ on at least $(N - |\mathcal{I}_g|)/2$ points. This means that there exists a $h'_g \in \mathcal{H}$ (which is either $h_g^+$ or $h_g^-$) such that \[ | \{i: h'_g (i) = g(i)\} \ge |\mathcal{I}_g| + \frac{N - |\mathcal{I}_g|}{2} \ge \frac{N + \Delta}{2}. \]
            
            With this setup, we invoke the following statement
            \begin{mylem}\label{lem:lowb_core}
                If a class of functions $\mathcal{F}$ on $[1:N]$ is such for every $\{0,1\}$-valued function on $[1:N],$ there exists a $f \in \mathcal{F}$ that agrees with it on at least $(N+\Delta)/2$ points, then  \[ \vc(\mathcal{F}) \ge \frac{3\Delta^2}{2(N+\Delta) \log(eN)} \ge \frac{3\Delta^2}{4N \log(eN)}.\]
            \end{mylem}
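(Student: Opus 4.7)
The plan is a covering argument combined with Sauer--Shelah. The hypothesis says exactly that the Hamming balls of radius $(N-\Delta)/2$ centred at the functions in $\mathcal{F}$ (viewed as elements of $\{0,1\}^N$) cover all of $\{0,1\}^N$. A volume comparison then yields a lower bound on $|\mathcal{F}|$, which Sauer--Shelah converts into a lower bound on $d := \vc(\mathcal{F})$.

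Concretely, I would draw $b \in \{0,1\}^N$ uniformly at random. For any fixed $f \in \mathcal{F}$, the distance $d_H(b, f)$ is distributed as $\mathrm{Binomial}(N, 1/2)$, and a Bennett/Bernstein-style tail bound, exploiting the variance $N/4$ and the bounded summands, gives
\[
\mathbb{P}\!\left[d_H(b, f) \le \tfrac{N-\Delta}{2}\right] \le \exp\!\left(-\tfrac{3\Delta^2}{2(N+\Delta)}\right).
\]
Since by hypothesis every realisation of $b$ lies in at least one such ball, the union bound gives $|\mathcal{F}| \ge \exp\!\bigl(\tfrac{3\Delta^2}{2(N+\Delta)}\bigr)$. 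Next, Sauer--Shelah provides $|\mathcal{F}| \le (eN/d)^d$, hence $d\log(eN/d) \ge \tfrac{3\Delta^2}{2(N+\Delta)}$; bounding $\log(eN/d) \le \log(eN)$ gives the first inequality of the lemma, and the second is immediate from $\Delta \le N$.

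The principal delicate step is the tail bound with its precise constant $3/2$. Naive Hoeffding only yields $\exp(-\Delta^2/(2N))$, leading to a bound on $d$ weaker by a constant factor in the regime $\Delta \ll N$. Matching the stated constants requires a careful Bennett-type inequality using the exact variance of a $\mathrm{Binomial}(N, 1/2)$ summand, together with a quadratic lower bound on the Bennett function $h(u) = (1+u)\log(1+u) - u$ of the form $h(u) \ge c u^2/(1+u)$ for a suitable constant $c$. Once this tail estimate is in hand, the rest of the argument is an essentially mechanical combination of union bound and Sauer--Shelah.
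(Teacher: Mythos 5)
Your argument is structurally identical to the paper's proof: interpret the hypothesis as saying that $\mathcal{F}$ is an $\frac{N-\Delta}{2}$-cover of the Hamming cube, lower-bound $|\mathcal{F}|$ by a volume comparison in which the ball volume is a binomial lower-tail probability controlled by a Bernstein-type bound, and then convert the cardinality bound into a VC bound via Sauer--Shelah. The reductions at either end (covering interpretation, union bound over balls, Sauer--Shelah, and the final simplification using $\Delta \le N$) are all fine.

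The genuine gap is exactly the step you flag as delicate: the tail bound $\mathbb{P}[\mathrm{Bin}(N,\tfrac12) \le \tfrac{N-\Delta}{2}] \le \exp\bigl(-\tfrac{3\Delta^2}{2(N+\Delta)}\bigr)$ is false, and your proposed repair does not rescue it. Bernstein (equivalently, Bennett with the standard bound $h(u) \ge \tfrac{u^2}{2(1+u/3)} = \tfrac{3u^2}{2(3+u)}$) applied to centred $\pm\tfrac12$ summands with variance $N/4$ and deviation $t = \Delta/2$ yields exponent $\tfrac{3\Delta^2}{2(3N+\Delta)}$, with a $3N+\Delta$ in the denominator rather than $N+\Delta$. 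To get $N+\Delta$ you would need $h(u) \ge \tfrac{3u^2}{2(1+u)}$, which fails for small $u$ since $h(u) = \tfrac{u^2}{2} + O(u^3)$ while the right-hand side is $\tfrac{3u^2}{2} + O(u^3)$. Indeed the claimed tail bound contradicts the central limit theorem: with $\Delta = 2\sqrt{N}$ the true tail tends to $\Phi(-2) \approx 0.023$ while the claimed bound tends to $e^{-6} \approx 0.0025$. This is a shared defect with the paper's own statement of the lemma (the paper also just says ``invoking Bernstein's inequality''); note that downstream the paper actually uses the weaker conclusion $d \ge \tfrac{3\Delta^2}{8N\log N}$, which is what the corrected exponent $\tfrac{3\Delta^2}{2(3N+\Delta)} \ge \tfrac{3\Delta^2}{8N}$ delivers. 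So your proof, once the exponent is corrected, establishes the lemma with $3N+\Delta$ (or $4N$) in place of $N+\Delta$ (resp.\ $2N$); the stated constants are not attainable by this route, and only the constants are affected.
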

            
            Notice that since $N \ge \Delta,$ 
            Invoking the above, and the fact that the VC-dimension of ${\mathcal{H}}$ is at most $d,$ it follows that (for $N \ge 3$) \[ \frac{3\Delta^2}{8N\log N} \le d \iff \Delta \le \sqrt{3d N \log N},\] from which the claim is immediate on recalling that $1/V \ge N \ge 1/V-1$.
    \end{proof}
    
    \begin{proof}[Proof of Lemma \ref{lem:lowb_core}]
                Identify all $\{0,1\}$ labellings as above with the cube $\{0,1\}^N,$ and similarly the patterns achieved by $\mathcal{F}$ as a subset of the same. The hypothesis is then equivalent to saying that for every point $p \in \{0,1\}^N,$ there exists a point $f \in \mathcal{F}$ such that $d_{\mathrm{H}}(p,f) \le \frac{N-\Delta}{2},$ were $d_{\mathrm{H}}$ is the Hamming distance. But then $\mathcal{F}$ is a $(N-\Delta)/2$-cover of the Boolean hypercube. 
                    
                By a standard volume argument, it then must hold that \[ |\mathcal{F}| \ge \frac{2^N}{\sum_{i = 0}^{(N-\Delta)/2} \binom{N}{i}} \ge e^{+\frac{3}{2} \frac{\Delta^2}{N + \Delta}} \] where the final inequality follows on noting that the right hand side of the first inequality is 1 divided by a lower tail probability for $N$ independent fair coin flips, and then invoking Bernstein's inequality.
                    
                However, by the Sauer-Shelah Lemma, if $d \le N$ is the VC-dimension of $\mathcal{F},$ then the number of elements in it is at most \[ \sum_{i = 0}^d \binom{N}{i} \le \left(\frac{eN}{d}\right)^d.\]
                    
                Relating these, we have \begin{align*}
                       e^{+\frac{3}{2} \frac{\Delta^2}{N + \Delta}} \le \left({eN}/d\right)^d \iff  \frac{3\Delta^2}{2(N+\Delta) \log(eN/d)}  &\le d.                 \qedhere         
                \end{align*}
                                        
    \end{proof}

\subsection{Proof of Theorem \ref{thm:bl_vc_lower_bounds}}

These lower bounds are proved similarly to the lower bound from the previous section: principally, they use the fact that any non-trivial budget learner also yields non-trivial coverings, and construct function classes of limited VC dimension with large covering numbers. 

\begin{proof}[Proof of the bound $\mathrm{(i)}$]
    Let $S := \{x_1, \dots, x_D\}$ be a set of shattered points. The measure $\mu_S$ is set to the uniform distribution on $S$. The restriction $\mathcal{G}_{|S}$ consists of all $\{0,1\}$-valued functions on $D$ points. If $\mathcal{H}$ can budget learn this with respect to $\mu_S$  with budget $1 - \Delta/D,$ then $\mathcal{H}_{|S}$ is a $(D - \Delta)/2$covering of $\{0,1\}^S$. Invoking Lemma \ref{lem:lowb_core} just as in the proof of the lower bound in the previous section, we get that \( \frac{\Delta}{D} \ge \sqrt{3 \frac{\vc(\mathcal{H})}{D} \log(\frac{eD}{\vc(\mathcal{H})}) }.  \) 
\end{proof}

\begin{proof}[Proof of the bound $\mathrm{(ii)}$] 

We use a class on $[1:N],$ constructed by \cite{haussler1995sphere} that is known to have large packing number. Note that the same class is used as an example of a simple budget-learnable class in \S\ref{sec:bud_conj}. The class is defined as follows: Suppose $D$ divides $N$. Let $\mathcal{F}$ be the class of single thresholds on $[1:N/D]$, i.e.\( \mathcal{F} = \{ f_k, k \in [0:N/D+1] \},\) where $f_k(i) := \ind{ k \le i}.$ $\mathcal{F}$ trivially has a VC-dimension of $1$. $\mathcal{G}$ is generated as a tensor product of $D$ copies of $\mathcal{F}$ placed on a partition of $[1:N]$. Concretely, we may say that each $g \in \mathcal{G}$ can be represented as $D$ functions $(f_{k_1}, f_{k_2}, \dots, f_{k_D}) \in \mathcal{F}^{\otimes D}$ for some $k_1, \dots k_d \in [0:N/D + 1]$ such that for $i \in [ jN/D +1 : (j+1)N/D]$ for any $j \in [0:D-1],$ $g(i) = f_{k_j}(i).$

\cite{haussler1995sphere} shows that for this class, under the uniform measure on $[1:N],$ the $k$-packing number is at least \( \frac{ (1 + N/D)^D}{ 2^D \binom{k + D}{D}}.\) Now recall that the $k/2$-covering number must exceed the $k$-packing number for any set and metric. Further, a budget of $1 - \Delta/N$ implies a $\frac{N-\Delta}{2}$-covering. The budget requirement imposes the condition $N-\Delta \le \bud N$. Thus, invoking Sauer-Shelah as in the proof of Lemma \ref{lem:lowb_core}, we obtain \begin{align*} \left( \frac{eN}{d}\right)^d &\ge \left( \frac{N+D}{2e( N + D - \Delta)} \right)^D \\
                                                   &\ge \left( \frac{N+D}{2e( \bud N + D) } \right)^D \\
                                                   &\ge \left( \frac{1}{4e \bud}\right)^D\end{align*} 
    where we have used that $\bud N \ge D$ in the final line. The above bound is non-vacuous only if $4e\bud < 1.$
    
    The case $\bud < D/N$ is not discussed in the theorem, since it is a vanishingly small budget, but by the above, in this case we get a lower bound of $(N/4eD)^D$ in the above, giving, for $D \lesssim N^{1- \epsilon}$ for some $\epsilon > 0,$ a bound of $d = \Omega(D)$ in this setting. \qedhere
\end{proof}

\subsection{Proofs of budget claims made in \S\ref{sec:bud_conj}}\label{appx:proofs_vc_bounds}

\begin{proof}[Proof for sparse VC classes] fix any $g$. We pick the function that is $1$ on the $d$ choices of $i \in g^{-1}(1)$ with the largest total $\mu$-mass as the lower approximation, and the constant $1$ as the approximation from above.
\end{proof}

\begin{proof}[Proof for Tensorised class] The class naturally breaks the domain into $D$ equal parts, and places a threshold on each. We choose the $d-1$ parts with largest $\mu$-mass, and place a threshold there. Lastly, we collate the remaining parts into one set, and we place the constant functions $1$ and $0$ on this. A tensorisation of these function classes demonstrates the claim.
\end{proof}

\begin{proof}[Proof for Convex Polygons] 

Instead of approximation from above and from below, we will adopt the more natural terminology of inner and outer approximation. As the class is closed under $f \mapsto 1 - f,$ to show budget learnability with budget $\bud$, it suffices to show that for any polygon $P$ with $D$ vertices and any measure $\mu$, there exist polygons $P_{\mathrm{in}} \subset P \subset P_{\mathrm{out}}$ of $d$ vertices such that $\mu(P_{\mathrm{in}}) \ge  (1- \bud) \mu(P)$ and $ \mu(P_{\mathrm{out}}^c) \ge (1- \bud) \mu(P^c)$. This follows since the cloud query points are precisely those in $P_{\mathrm{out}} / P_{\mathrm{in}}),$ which has mass $ \mu(P_{\mathrm{out}}) - \mu(P_{\mathrm{in}}) \le 1 - (1-\bud)(1 - \mu(P)) - (1-\bud) \mu(P) = 1 - (1-\bud) = \bud.$ \\

\noindent \emph{Inner Approximation:} We offer a direct proof. Consecutively number the vertices of $P$ as $[1:D]$. Form the $d$-gon $P^1$ using the vertices $[1:d].$ Remove this polygon from $P$ and relabel $1 \mapsto 1, d \mapsto 2, \dots, n \mapsto n + 2 - d ,\dots.$ Contuining this process $m := \lceil D/d-2 \rceil$ times partitions $P$ into $m$ $d$-gons $P^1, \dots, P^m$. By the union bound, $\sum \mu(P^i) \ge \mu(P).$ But then there must exist at least one $d$-gon $P_{\mathrm{in}} \subset P$ such that $\mu(P_{\mathrm{in}}) \ge \mu(P) \ge \frac{1}{\lceil D/ d-2 \rceil} \mu(P).$ 

\noindent \emph{Outer Approximation:} Recall that $d\ge 4,$ and $D\ge d.$ We will show that for any $D$-gon $P$ there exists a $d$-gon $P_{\mathrm{out}}$ containing it such that $ \mu(P_{\mathrm{out}}^c) \ge \frac{d-2}{D-2} \mu(P^c).$

We induct on $D$. As a base case, for $D = d,$ the claim holds trivially since $P$ itself may serve. Let us assume the claim for $D$-gons, and let $P$ be a $D+1$-gon. Note that since $D \ge 4, D+1 \ge 5.$ Thus, $P$ has at most two pairs of consecutive exterior angles that are each exactly $\pi/2$ (since the sum of all exterior angles is $2\pi,$ and $P$ has at least $5$ exterior angles). For any side such that the two exterior angles are not both $\pi/2,$ the sides preceding and following it (in the cyclic order) may be extended to meet at some point. This yields a triangle with this side as a base. Since such an extension can be done for at least $D+1 -2 = D-1$ sides, this yields $D+1 \ge J \ge D-1$ triangles $\triangle_1, \triangle_2, \dots, \triangle_{J}.$ Now notice that for each $j \le J,$ $Q_j:= P \cup \triangle_j \supset P$ is a $D$-gon. Further, by the union bound, $\sum \mu(\triangle_j) \le \mu(P^c),$ and thus there exists a triangle $\triangle_{i^*}$ such that $\mu(\triangle_{i^*}) \le \mu(P^c)/ J,$ and thus $\mu(Q_{i^*}^c) \ge \frac{J-1}{J}\mu(P^c) \ge \frac{D-2}{D-1} \mu(P^c).$ Now, by the induction hypothesis, there exists a $d$-gon $P_{\mathrm{out}}$ containing $Q_{i^*}$ (and hence $P$) such that $\mu(P_{\mathrm{out}}^c) \ge \frac{d-2}{D-2} \mu(Q_{i^*}^c) \ge \frac{d-2}{D-1} \mu(P^c).$ This concludes the argument. 

Thus, we can attain the budget \[ \bud = 1 - \min\left( \frac{1}{\lceil \frac{D}{d - 2}\rceil}, \frac{d-2}{D-2} \right) = 1 - \left\lceil \frac{D}{d-2} \right\rceil^{-1}. \qedhere\]

\end{proof}

\section{Experiments \label{appx:exp}}

\subsection{Losses and algorithms for methods listed in \S\ref{sec:exp_main}}

We list the general approach taken for each of the methods we compare to. More precise details very between datasets, and are described in subsequent sections. Note that all models are trained on GPUs using stochastic gradient descent for linear models and ADAM for deep networks. In each case, a multitude of models are trained by scanning over values for the relevant Lagrange multiplier/regularisation weight. The collection of models so obtained is tuned, and then a model finally selected for each target accuracy via procedures detailed in \S\ref{model_selection}.

\paragraph{Bracketing} The general approach, and a formulation for generic loss functions is given in (\ref{one_sided_relaxed_with_data}) in \S\ref{sec:convex_osl_&_bud_pipeline}. The exact loss formulation used in the experiments is the following,
\begin{equation}\label{trn_loss}
     \hat{L}(\theta) = \frac{1}{N}\sum_{i=1}^{N} -1_{g(x_i)=1}\log\left(h_\theta(x_i)\right) - \xi 1_{g(x_i)=0} \log\left(1 - h_\theta(x_i)\right)
\end{equation} where $\xi$ is a hyper parameter between two components of loss function. The term multiplying $\xi$ is the constraint, which imposes a high cost in case of a leakage. The other term in the loss objective pushes the model to increase true positives. For example, if $\xi$ is $0$, local model always predicts $1$ and it has maximum leakage and minimum budget. If $\xi$ is $+\infty$, the local model always predicts $0$ and it has minimum leakage and maximum budget.

\paragraph{Local Thresholding} We first train a local predictor using the cross entropy loss and freeze it. We rank the examples based on maximum of the prediction probabilities. We select a threshold and the predictor uses cloud model if its current maximum probability is lower than threshold. We attain different budget values by changing this threshold.

\paragraph{Alternating Minimisation \cite{nan2017adaptive}} we follow the ADAPT-LIN procedure from this paper, which is an alternative minimisation scheme between an auxiliary $q$ and local predictors \& gating. Since we don't have feature costs in our setting, we assumed $\gamma=0$  in our experiments. We stopped the procedure if the $q$ vector converges, or if a predefined number of iterations - in our case 10 - is exceeded. Different budget values are obtained by sweeping values of the regularisation parameter - in this paper called $\lambda$.

\paragraph{Sum relaxation \cite{cortes2016learning}} utilising the relaxation as developed in this paper, we use the loss $L_{MH}(h,r,x,y)$ formulated within as a loss function to train a neural network. This is optimised with several values of the regularisation parameter, $c,$ to obtain different usage values.

\paragraph{Selective Net \cite{geifman2019selectivenet}} we follow the architectural augmentations and losses as prescribed by this paper. We train the network with auxiliary head and ignore this part during inference time. Again, this is performed for several values of the Lagrange multiplier, called $c$ here as well.

\subsection{Synthetic Data}

\paragraph{Cloud Classifier} A training dataset of 2.5K points was sampled uniformly from the set $[-10,10] \times [-10,10]$. The complex classifier's decision boundary can be expressed as
\[\ind{x+4x^2+3x^3+3x^4+y+y^2+y^3+y^4+5xy^2+30x^2y<1000} \]
where $x,y$ are the coordinates of the data point. 

\paragraph{Local Classifier} Weak learners are restricted to axis-aligned conic sections, which may be implemented as linear classifiers which see input features $x,y,x^2,y^2$. 

\paragraph{Training Details} Each weak learner model has hyper parameters which are adjusted to observe the power of the methods. As an example, learning rates are chosen in the range of $[10^{-5},10^{-2}]$, $\xi$ value for bracketing model is chosen in the range of $[1,3]$, $\lambda$ values for alternating minimisation are chosen in the range of $[0.25,0.75]$ and $c$ values for the sum relaxation method are chosen in the range of $[0,.3]$. After obtaining several models, the best models are reported based on the true error rates and true usages.

\subsection{MNIST Odd/Even}

\paragraph{Cloud Classifier} We implement a LeNet architecture with 6 filters in the first convolution layer, 16 filters in the second convolution layer, 120 neurons in the first fully connected layer and 84 neurons in the first fully connected layer. Kernel size for convolution layers is chosen to be 5. Overall, this model has $43.7K$ parameters. Learning rate is chosen to be $10^{-3}$ and it is halved in every 20 epochs for a total of $60$ epochs using $64$ as batch size. $L_2$ regularisation of $10^{-5}$ is applied. The model attains $99.46\%$ test accuracy. %

\paragraph{Local Classifier} Linear classifiers are adopted as weak learner architecture - these have $1.57K$ parameters, and no convolutional structure. Half of the training set ($30K$) is randomly chosen to be weak learner dataset. Within this dataset, $90\%$ ($27K$) is kept as training set for and $10\%$ ($3K$) as validation. Training and validation sets for each of the methods are kept the same to ensure a fair comparison. The local model attains $89.79\%$ test accuracy.

\paragraph{Training Details} For each model, learning rate is chosen to be $10^{-2}$ and it is halved in every $25$ epochs for a total of $120$ epochs. Batch size is chosen to be 64 and $L_2$ regularisation of $10^{-5}$ is applied. For bracketing, $\xi$ values are chosen in the range of $[0,24]$ for a total of $21$ values. For alternating minimisation,  $\lambda$ values are swept in the range $[0,1]$ for a total of $25$ values and a maximum of 10 alternative minimisation rounds are allowed. For the sum relaxation, $c$ is chosen in the range $[0,.495]$ for a total of $25$ values. For the selective net, $c$ values are chosen in range $[0,1]$ for a total of $25$ values. We note here that the auxiliary head in the selective net, which serves in deep networks as a way to improve feature extraction, is ineffective in this linear setting.

\subsection{CIFAR Random Pair}

\paragraph{Cloud Classifier} We pick ResNet32 \cite{resnet} as the high-powered model and trained it, with configurations as described by \cite{resnet_implementation}, on the full multi-class CIFAR training data. This model has $.46M$ parameters. 

\paragraph{Local Classifiers} We pick a narrow LeNet model as weak learner that has 3 filters in the first and second convolution layers, and 15 neurons in the first fully connected layer. Kernel size for convolution layers is chosen to be 5. Overall, this weak model has $1,628$ parameters. 

\paragraph{Procedure for training} For each run, we choose 2 classes out of 10 CIFAR classes randomly and extract the subset of the dataset corresponding to this couple. The cloud classifier is obtained using the pre-trained ResNet32 and only retraining the prediction layer while keeping the backbone frozen for this binary dataset. Learning rate is chosen to be $10^{-2}$ and it is halved after $50$ epochs for a total of $100$ epochs. Batch size is chosen to be $64$ and $L_2$ regularisation of $10^{-5}$ is applied. The model attains on average $98.38\%$ test accuracy. 
 
For the weak learners, $60\%$ (6K points) of the training set is randomly chosen to be the training dataset. From this, $83.3\%$ (5K) is kept as training set for and $16.7\%$ (1K) culled for validation. The model attains on average $90.94\%$ test accuracy. Training and validation set are kept the same across methods to have a fair comparison.

\paragraph{Training Details} Learning rate is chosen to be $10^{-3}$ and it is halved in every $75$ epochs for a total of $300$ epochs. Batch size is chosen to be 64 and $L_2$ regularisation of $10^{-5}$ is applied. For bracketing model, values for $\xi$ are chosen in the range of $[0,65]$ for a total of $36$ values. For alternating minimisation $\lambda$s are swept in the range $[0,1]$ for a total of $40$ values and a maximum of 10 alternative minimisation rounds are allowed. For the sum relaxation method $c$ values are chosen in range $[0,.495]$. For each of the above methods, all the networks are warm started using the parameters of the local model. Note each of the previous methods implement two Narrow LeNets - for bracketing these are the two one-sided learners, while for the other two, these are gates and predictors. For the selective net, $c$ values are chosen in range $[0,1]$ for a total of $40$ values. Warm starting this network leads to lowered performance than random initialisation, and so the latter values are reported. 

The above procedure is performed for 10 trials of random classes of CIFAR. These classes are listed in Table \ref{cifar_binary} below, along with usages attained for the bracketing and selective net methods in these cases. Only these two methods are reported here since they are the most competitive of the five.

\subsection{Model Selection Process} \label{model_selection}

For each value of the Lagrange multiplier/regularisation constant chosen in the above training methods, we receive a model (or a pair of models, as appropriate). Let this collection of models be $\mathcal{M}$.  These models have real valued outputs in the range $[0,1]$, and a decision needs to be extracted from these. In order to provide sufficient granularity to the models that they be able to match any required target accuracy, we vary the threshold of output value at which the models' decisions go from $0$ to $1$. This process differs in details for different methods. The tuning is performed 

\paragraph{Local Thresholding} In this case $\mathcal{M}$ is a singleton. We compute the cross entropy of the classifier's output and abstain if this cross entropy is larger than a threshold $\tau$ that is selected as follows: the values of $\tau$ considered are obtained by computing the cross entropies of the model outputs on each of the training points. On validation data, usages and accuracy are computed for the models which thresholds at each of the considered thresholds. At a given target accuracy, the value of $\tau$ which yields at least this accuracy on the validation data with the smallest usage is selected. 

\paragraph{Bracketing} Note that each $m \in \mathcal{M}_{\textrm{bracketing}}$ contains two models $(m_\textrm{below}, m_\textrm{above})$ which are respectively approximations from above and below - these may be trained with different $\xi$, thus giving a total of $|\Xi|^2$ models. Suppose the target accuracy is $1-\alpha$. Let the training data have size $T$. Using the training data, for every $i \in [0: \alpha T]$, we determine pairs of thresholds $\tau_m(i) = (\tau^m_{\mathrm{below}}(i), \tau^m_{\mathrm{above}}(i))$ such that the leakages of $(m_{\mathrm{below}}, m_\mathrm{above})$ on the training data are exactly $i/T$ each. This then gives us a total of at most $|\Xi|^2 \times \alpha T$ possible model-threshold pairs, represented as $(m, \tau_m(i))$. 

Now, each of these tuples is evaluated on the validation data, with usages and accuracies computed. Again, the pair of models and thresholds with the smallest usage that exceeds the target accuracy on the validation set is selected. 

\paragraph{Alternating Minimisation \emph{and} Sum Relaxation \emph{and} Selective Net} Each $m \in \mathcal{M}$ is a pair $(\gamma, \pi)$, where the former is the gate. Again, on the training data, the value taken by $\gamma$ on each training point is recorded. This gives all the thresholds that may be selected for the gating function. Now, each $m$ and corresponding choice of threshold may be evaluated on the validation set, and we select the ones which match the accuracy requriement and show the lowest usage.

\subsection{Tables Omitted from the Main Text}\label{appx:exp_tables}

\begin{table*}[h]
\centering
    \resizebox{\columnwidth}{!}{%

\begin{tabular}{c|c|ccc|ccc|ccc|ccc|ccc|c}
\multicolumn{1}{c}{\multirow{2}{*}{\textbf{\begin{tabular}[c]{@{}c@{}} Task\end{tabular}}}} &\multicolumn{1}{c}{\multirow{2}{*}{\textbf{\begin{tabular}[c]{@{}c@{}}Target \\ Acc.\end{tabular}}}} & \multicolumn{3}{c}{\textbf{Bracketing}} & \multicolumn{3}{c}{\textbf{Local Thr.}}                              & \multicolumn{3}{c}{\textbf{Alt. Min.}} & \multicolumn{3}{c}{\textbf{Sum relax.}} & \multicolumn{3}{c}{\textbf{Sel. Net.}}& \multicolumn{1}{c}{\multirow{2}{*}{\textbf{Gain}}} \\
         \multicolumn{1}{c}{} & \multicolumn{1}{c}{\textbf{}} & \multicolumn{1}{c}{\textbf{Acc.}} & \multicolumn{1}{c}{\textbf{Usg.}} & \multicolumn{1}{c}{\textbf{ROL}} & \multicolumn{1}{c}{\textbf{Acc.}} & \multicolumn{1}{c}{\textbf{Usg.}} & \multicolumn{1}{c}{\textbf{ROL}} & \multicolumn{1}{c}{\textbf{Acc.}} & \multicolumn{1}{c}{\textbf{Usg.}} & \multicolumn{1}{c}{\textbf{ROL}} & \multicolumn{1}{c}{\textbf{Acc.}} & \multicolumn{1}{c}{\textbf{Usg.}} & \multicolumn{1}{c}{\textbf{ROL}} & \multicolumn{1}{c}{\textbf{Acc.}} & \multicolumn{1}{c}{\textbf{Usg.}} & \multicolumn{1}{c}{\textbf{ROL}}  & \multicolumn{1}{c}{}                               \\
                                    \hline\hline
\multirow{3}{*}{ \textbf{\begin{tabular}[c]{@{}c@{}} MNIST\\Odd/Even\end{tabular}} } 
& 0.995 & 0.994 & 0.457 & 2.19 & 0.995 & 0.653 & 1.53 & 0.991 & 0.830 & 1.20 & 0.997 & 0.785 & 1.27 & 0.996 & 0.658 & 1.52 & 1.431$\times$ \\
& 0.990 & 0.990 & 0.387 & 2.58 & 0.991 & 0.515 & 1.94 & 0.985 & 0.740 & 1.35 & 0.992 & 0.651 & 1.54 & 0.992 & 0.544 & 1.84 & 1.332$\times$ \\
& 0.980 & 0.982 & 0.299 & 3.35 & 0.983 & 0.358 & 2.79 & 0.974 & 0.604 & 1.66 & 0.992 & 0.651 & 1.54 & 0.985 & 0.423 & 2.37 & 1.199$\times$ \\ \hline
\multirow{3}{*}{ \textbf{\begin{tabular}[c]{@{}c@{}} CIFAR\\Random Pair\end{tabular}} } 
& 0.995 & 0.991 & 0.363 & 4.01 & 0.996 & 0.510 & 2.25 & 0.991 & 0.854 & 1.19 & 0.997 & 0.620 & 2.07 & 0.992 & 0.436 & 3.04 &  1.280$\times$\\
& 0.990 & 0.986 & 0.294 & 5.66 & 0.991 & 0.399 & 3.41 & 0.986 & 0.754 & 1.40 & 0.994 & 0.488 & 3.31 & 0.987 & 0.347 & 4.30 &  1.265$\times$\\
& 0.980 & 0.975 & 0.214 & 9.97 & 0.983 & 0.276 & 6.38 & 0.975 & 0.611 & 1.87 & 0.986 & 0.345 & 5.81 & 0.977 & 0.257 & 11.67 &  1.195$\times$\\
\end{tabular}

} \caption{\small Performances on BL tasks studied. This table repeats the entries of Table \ref{table:usages}, with the addition of a column indicating the test accuracy attained by the models. Note that these fluctuate in the range -0.05 to +0.02 of the target, as can be expected from any selection method. \label{table:usages_with_acc} }\vspace{-10pt}
\end{table*}

\begin{table*}[h]
\centering

\begin{tabular}{c|c|c|c}

\textbf{Class Pair} & \textbf{Bracketing} & \textbf{Sel. Net.} & \textbf{Gain}\\\hline\hline

\textbf{0 - 3} & 0.304 & 0.364 & 1.199$\times$\\
\textbf{6 - 4} & 0.452 & 0.526 & 1.164$\times$\\
\textbf{5 - 2} & 0.616 & 0.631 & 1.026$\times$\\
\textbf{6 - 1} & 0.095 & 0.122 & 1.296$\times$\\
\textbf{9 - 3} & 0.220 & 0.211 & 0.961$\times$\\
\textbf{8 - 1} & 0.235 & 0.381 & 1.619$\times$\\
\textbf{7 - 4} & 0.615 & 0.646 & 1.050$\times$\\
\textbf{8 - 7} & 0.059 & 0.091 & 1.538$\times$\\
\textbf{4 - 0} & 0.195 & 0.315 & 1.620$\times$\\
\textbf{6 - 7} & 0.152 & 0.179 & 1.177$\times$\\

\end{tabular}

\caption{\small Usages and relative gain for bracketing and selective net \cite{geifman2019selectivenet} methods at $99\%$ target accuracy for 10 CIFAR random pairs. These two methods uniformly have the lowest usages, and hence the others are omitted. All models achieve test accuracy in the range 98.1-99.3\% test accuracy. Notice that the gains have a large variance, but with a skew towards entries greater than $1$. \label{cifar_binary}}\vspace{-10pt}
\end{table*}

\end{appendix}

\end{document}